\ifdefined\XeTeXrevision\else\pdfoutput=1\fi
\documentclass[twoside,11pt]{article}

\usepackage{nonatbib}
\usepackagewithoutnatbib[preprint]{jmlr2e}
\usepackage{fixthmtools}
\usepackage{lastpage}

\usepackage[authoryear,citations,centerfigures,commands,enumerate,environments,lmrscinnershape,theorems]{AVT}

\crefname{equation}{}{}
\bibliography{Adversarial-TS}
\usepackage{framed}
\usepackage{xcolor}
\newenvironment{customleftbar}{\MakeFramed{\advance\hsize-\width \FrameRestore}\setlength{\parindent}{0pt}\slshape}{\endMakeFramed}

\title{Bayesian Algorithms for Adversarial Online Learning:\\from Finite to Infinite Action Spaces\expandafter\gdef\csname @fnsymbol\endcsname##1{1}\thanks{This manuscript was formerly titled \emph{An Adversarial Analysis of Thompson Sampling for Full-information Online Learning: from Finite to Infinite Action Spaces}.}\\[0.375\baselineskip]\normalfont\normalsize\slshape Dedicated to the memory of David Draper}

\author{%
  Alexander Terenin \\ 
  \addr Cornell University
  \AND
  Jeffrey Negrea \\ 
  \addr University of Waterloo and the Vector Institute
}

\jmlrheading{}{}{1-\pageref{LastPage}}{}{}{}{Terenin and Negrea}
\ShortHeadings{Bayesian Algorithms for Adversarial Online Learning}{Terenin and Negrea}

\editor{}

\begin{document}

\maketitle

\begin{abstract}
We develop a form Thompson sampling for online learning under full feedback---also known as prediction with expert advice---where the learner's prior is defined over the space of an adversary's future actions, rather than the space of experts. We show regret decomposes into regret the learner expected a priori, plus a prior-robustness-type term we call excess regret. In the classical finite-expert setting, this recovers optimal rates. As an initial step towards practical online learning in settings with a potentially-uncountably-infinite number of experts, we show that Thompson sampling over the $d$-dimensional unit cube, using a certain Gaussian process prior widely-used in the Bayesian optimization literature, has a $\mathcal{O}\Big(\beta\sqrt{Td\log(1+\sqrt{d}\frac{\lambda}{\beta})}\Big)$ rate against a $\beta$-bounded $\lambda$-Lipschitz adversary.
\end{abstract}

\section{Introduction} 

Making predictions in the face of adversarial feedback is a central problem in learning theory, and is known by a number of names, including online learning, no-regret learning, and prediction with expert advice \cite{cesabianchi2006prediction}.
In this framework, at each time step $t$, the \emph{learner} must decide on which action $x_t \in X$ to take, while, simultaneously and with knowledge of the learner's strategy, an \emph{adversary} chooses a reward function $y_t : X \to \R$ from some space of reward functions $Y$.
The learner then receives a reward of $y_t(x_t)$, and the game is repeated.
The learner's goal is to minimize their expected regret with respect to the best single action in hindsight.

Online learning is of fundamental importance because many other learning problems can be solved by reduction to an online learning oracle.
For example, in algorithmic game theory, coarse correlated equilibria can be computed using \emph{no-regret dynamics} \cite{roughgarden16}, which essentially work by implementing a set of agents that each play online learning algorithms against each other.
This approach extends to stricter equilibrium concepts when modified appropriately.
As a second example, \textcite{foster21,foster22,foster23} have recently shown that various variants of \emph{decision-making with structured observations}, a generalization of multi-armed bandits where pulling an arm can additionally provide information about other arms, and which includes certain forms of reinforcement learning, can be efficiently solved by reduction to an online regression oracle---a special case of an online learning~algorithm.
Other examples include generalization bounds \cite{lugosi23} and confidence sets \cite{clerico25}.

The aforementioned reductions, in general, involve a specialized action space $X$.
Online learning, however, is best-understood when $X$ is a finite set, with exponential weights variants the most-common algorithmic approach.
Motivated by these reductions, we ask:

\begin{customleftbar}
Given a general, potentially-uncountable action space $X$, and a space $Y$ of reward functions with a given degree of regularity, what practically-implementable algorithm should the learner~use?
\end{customleftbar}
In this work, we develop a framework for answering this question.
Our starting point will be the work of \textcite{gravin2016towards}, who derived the Nash equilibrium of infinite-horizon discounted online learning with three experts, namely $X = \{1,2,3\}$ and $Y = \{y \in \ell^\infty(X) : \norm{y}_\infty \leq 1\}$---and discovered the learner's optimal strategy to be a particular form of Thompson sampling: the learner should choose arms with probability proportional to them being best-in-hindsight under a certain maximin-optimal Bayesian prior distribution over possible adversaries.
While, at first, the scope of this result may seem very narrow, we show its minimax analogues hold in substantially more general settings.
To do so, we:
\1 Develop a framework for deriving regret bounds for Thompson sampling algorithms akin to those of \textcite{gravin2016towards}, but which allow general action spaces for the learner, and general priors over adversaries, as opposed to the maximin-optimal prior. We do so by showing that regret decomposes into the \emph{prior regret} the learner expects to incur, and \emph{excess regret} arising from the true adversary not matching the prior.
\2 Introduce an ansatz for selecting strong priors based on regret lower bounds. Specifically, we use Gaussian priors based on the covariance structure of a strong adversary to address our key question of what algorithm the learner should use. This approach can be interpreted as a recipe for transforming regret lower bounds into upper bounds.
\3 Develop techniques for bounding prior and excess regret, using expected supremum bounds for the former, and by showing the latter are bounded by certain Bregman divergences. These divergences are dual to those appearing in the \emph{local norm} analysis of \textcite{orabona2019modern},~and are closely related to certain terms arising in the \emph{stochastic smoothing} formalism of \textcite{abernethy2016perturbation}.
\4 Generalize the key technical inequality used by \textcite{abernethy14} for bounding certain Hessians associated with Gaussian perturbations, which requires independence across experts, to allow for general covariances.
Our argument is more flexible, and is probabilistic, as opposed linear-algebraic, in nature.
To our knowledge, this has been a key barrier to extending convex-analytic FTPL analyses to general action spaces.
\5 Show, for Gaussian process priors, that the aforementioned recipe can work in non-discrete action spaces: in the setting $X = [0,1]^d$ with uncountably-many experts and $Y$ the set of $\beta$-bounded $\lambda$-Lipschitz functions, Thompson sampling achieves a~regret~of
\[
\c{O}\del{\beta \sqrt{Td\log\del{1+\sqrt{d}\frac{\lambda}{\beta}}}}
.
\] 
\0 

Thompson sampling, named for the seminal work of \textcite{thompson1933likelihood,thompson1935theory}, is a special case of the \emph{follow-the-perturbed-leader} framework \cite{kalai2005efficient}, with perturbations given by the posterior variability of the best action.
The analysis of Thompson sampling in the stochastic bandit setting---for instance, following \textcite{russo2016information}---leverages the ability of the algorithm to borrow information across time to ensure posterior concentration, but not across actions.
In the adversarial setting, our work shows this can be addressed by having the learner place their prior over the \emph{set of adversary's reward sequences}, rather than on actions.
Posterior sampling has also been explored in general-feedback settings, for instance by \textcite{xu2023bayesian}, whose approach gives a variant of classical exponential weight algorithms in the full feedback setting we~study.

The algorithms we consider are can be viewed as a form of random rollouts.
From this view, our results are closely-related to the \emph{relax-and-randomize} perspective of \textcite{rakhlin2012relax}, but do not rely on Rademacher-complexity-theoretic assumptions on the perturbations, and lead to an easier-to-check condition relating the adversary's function class and perturbation distribution.
Our results can also be seen as a constructive analog of certain results of \textcite{sridharan2010convex}, which studies more-general games where the learner's strategies lie in spaces generalizing the space of probability measures we consider.

In spite of its completely differing motivation, the terms in our framework are closely related to those arising in the \emph{stochastic smoothing} approach of \textcite{abernethy2016perturbation} for analyzing \emph{follow-the-perturbed leader} algorithms---and, are also dual, in the convex conjugate sense, to those appearing in the \emph{local norm} analysis of \textcite{orabona2019modern} for more-general \emph{follow-the-regularized-leader} algorithms.
Our analysis reveals that learning rate schedule restrictions, which are used in both approaches---and exclude Thompson-sampling-like algorithms which correspond to increasing sequences of learning rates---are not necessary.

The algorithms we obtain from this approach can be implemented numerically on infinite state spaces in a manner similar to standard implementations of Thompson sampling used in Bayesian optimization \cite{wilson20,wilson21}.
In addition to our motivations arising from computational game theory, we therefore also view our work as a first step towards integrating ideas from adversarial learning into Bayesian optimization.

\section{Adversarial Full-information Online Learning in General Action Spaces}
\label{sec:preliminaries}
We study a variant of the \emph{online learning game}, first considered by \textcite{vovk1990aggregating,littlestone1994weighted} following work of \textcite{cover1966behavior}, and also called \emph{prediction with expert advice}.
This refers to a sequential two-player zero-sum game between a \emph{learner} and \emph{adversary}, both of which are allowed to randomize. 
At each time $t\in[T] = \cbr{1,..,T}$, the game proceeds as follows: 
\1 The learner picks $x_t\in X$, where $X$ is called the \emph{action space} or \emph{space of experts}.
\2 The adversary picks $y_t : X \to \R$, assumed to lie in some space $Y$ of \emph{reward functions}.
\3 Both choices are revealed and learner receives a \emph{reward} of $y_t(x_t)$ from the adversary.
\0
At time $T$, the learner's total rewards are adjusted by the total reward at the best-in-hindsight point to determine the game's total value.
From the adversary's perspective, the game's expected value is therefore given by the \emph{expected regret}
\[
R(p,q) &= \E_{\substack{x_t\~p_t\\y_t\~q_t}} \sup_{x\in X} \sum_{t=1}^T y_t(x) - \sum_{t=1}^T y_t(x_t)
\]
where $p = (p_1,..,p_t)$ is the learner's strategy, and $q = (q_1,..,q_t)$ is the adversary's strategy, both of which may be chosen adaptively.
The learner seeks strategies that minimize the expected regret, while the adversary aims to maximize it.
For a deterministic adversary, we write $R(p,y)$.
Using this formalism, the classical setting corresponds to taking $X = [N]$ to be a finite set, and $Y = \{y\in\ell^\infty(X;\R) : \norm{y}_\infty \leq 1\}$ to be an $\ell^\infty$ unit ball. 
In more general settings, to ensure the game's values are well-defined, we assume $X$ is second-countable compact Hausdorff, and $Y\subseteq C(X;\R)$ is a convex subset of continuous real-valued functions on $X$. 
For further details, see~\Cref{apdx:technical}.

In this work, we seek algorithms whose rate of regret, as a function of the time horizon $T$ and the complexity of the adversary's decision space $Y$, matches the \emph{minimax regret} 
\[
\min_p \max_q R(p,q)
.
\]
For finite expert classes, the minimax expected regret is $\Theta(\sqrt{T\log N})$. 
In this setting, the most well-known strategy achieving the minimax rate is the \emph{exponential weights} algorithm, also known as \emph{hedge} \cite{littlestone1994weighted,vovk1990aggregating}, which processes the historical rewards in order to explicitly compute a suitable probability measure over experts.

To generalize this to infinite expert sets, one approach is to represent the learner's actions by a density with respect to some prior \cite{maillard2010online,alquier21a,negrea2021minimax}: this typically yields policies for which sampling is intractable.
Another approach---which is closely related to, but does not directly cover, the infinite-expert setting---is to allow the learner and adversary to both act on the same Hilbert space, where much of the classical finite-dimensional online mirror descent results remain valid \cite{joulani17modular}. 
Part of our aim will be to develop appropriate convex-analytic foundations which serve the same purpose, but remain valid in genuinely non-Hilbert settings, such as the space of probability measures the learner's strategies are defined in.

A third approach focuses on proving minimax rates by reduction to the finite-expert setting via, for instance, sequential covers \cite{rakhlin2015online}. 
This establishes information-theoretic limits of sequential prediction using metric entropy variants.
From a computational viewpoint, such approaches ultimately require one to sample from complicated probability distributions on either infinite or large finite sets, which is only feasible in low-dimensional settings.
In certain cases, efficient reductions can be achieved with zooming-based algorithms \cite{kleinberg2019bandits}, which exploit low dimensionality to construct discrete structures within the continuous space.

Another class of algorithms known to achieve the minimax rate of expected regret in the finite-expert setting are \emph{follow-the-perturbed-leader (FTPL)} strategies, such as \textcite{kalai2005efficient,devroye2013prediction,vanerven14,abernethy2016perturbation}, and the relax-and-randomize framework of \textcite{rakhlin2012relax}.
These operate implicitly: rather than an explicit formula to compute the probabilities of a mixed strategy, a sampling procedure is defined directly, by adding a carefully-constructed random perturbation to the observed rewards.
In general action spaces, the key challenge is to understand how to translate smoothness assumptions about the adversary's reward function class into explicit choices of the perturbation distribution.

\section{Thompson Sampling for Adversarial Full-information Online Learning}
\label{sec:general_results}

In this work, we develop algorithms which are a form of \emph{Thompson sampling}---a Bayesian approach to decision-making under uncertainty widely-used in the stochastic bandit literature, which works by sampling each action according to the posterior probability it is optimal.
At first, one might not expect a Bayesian approach to make any sense for online learning: unlike in statistical learning, the reward functions $y_t$ in online learning vary arbitrarily at each round, so there is no well-specified likelihood one can use to build a Bayesian model.
Moreover, the most-common online learning algorithm---namely, exponential weights---uses an update rule which resembles, but does not exactly correspond, to classical Bayesian learning: it includes a learning rate term, which makes it trust the data~less.

In light of this, somewhat surprisingly, \textcite{gravin2016towards} show that the exact Nash equilibrium of discounted online learning with three experts admits a Bayesian interpretation: the learner's algorithm is a form of Thompson sampling.
At each round, the learner plays each expert according to its conditional probability of being the best eventual expert, with the distribution over future experts given by the optimal adversary of the game's maximin dual.
The proof proceeds by calculating the maximin-optimal adversary explicitly, then using its form to derive the learner's strategy.

Since the maximin optimal adversary's actions over different rounds are not independent, such adversaries are intractable in more general settings.
However, the functional form of the learner's strategy motivates the following question: \emph{does Thompson sampling, with respect to a prior given by a strong adversary, lead to a minimax rate regret bound?}
To study this, we begin by defining the Thompson sampling strategy for prediction with expert advice.

\begin{restatable}[Thompson Sampling]{definition}{DefTS}
\label{def:ts}
Let $q^{(\gamma)}$ be a prior over the adversary's strategy.
Define the \emph{Thompson sampling} strategy $p_t$ for the learner to be the strategy that (i) draws one sample from the conditional reward sum $\sum_{\tau=1}^T \gamma_\tau \given \gamma_1 = y_1, .., \gamma_{t-1} = y_{t-1}$, and (ii) plays the best-in-hindsight point with respect to this sample, with ties broken according to a given tie-breaking rule.
\end{restatable}

To further understand \Cref{def:ts}, note that this strategy can equivalently be written
\[
\label{eqn:ts}
x_t &= \argmax_{x\in X} \sum_{\tau=1}^{t-1} y_\tau(x) + \sum_{\tau=t}^T \gamma_\tau(x)
&
(\gamma_t,..,\gamma_T) &\~ q^{(\gamma)} \given \gamma_1 = y_1, .., \gamma_{t-1} = y_{t-1}
\]
by plugging in the values observed thus far into the reward sum.
It follows that this specific form of Thompson sampling---defined over the adversary's future reward functions---is a follow-the-perturbed-leader strategy, where the perturbation distribution and learning rate are set according to the posterior over the adversary's future actions.
\Cref{def:ts} can therefore be implemented, even for non-discrete spaces, given access to suitable Gaussian process sampling and optimization oracles.
To distinguish $q^{(\gamma)}$ from the true adversary, throughout this work we refer to it as the \emph{virtual adversary} under the prior.

Just as the key question for FTPL involved selecting the perturbation distribution, the key question for Thompson sampling is to understand which priors lead to regret guarantees achieving the minimax rate.
To develop this understanding, our angle of attack will be to extend the notions used by \textcite{gravin2016towards} to our more general setting.

\subsection{A Bayesian Decomposition of Minimax Regret: from lower to upper bounds}
\label{sec:excess-regret}

We now study minimax regret of Thompson sampling, as in \Cref{def:ts}.
This algorithm admits a natural Bayesian interpretation, so we will seek to analyze it in a Bayesian way, with the aim of extending the theoretical picture painted by \textcite{gravin2016towards} to the much broader minimax regret setting.
Let $\Gamma_t$ be the FTRL regularizer of Thompson sampling: defined, following \textcite[Chapter 30.5]{lattimore20}, in terms of its convex conjugate $\Gamma^*_t(f) = \E \sup_{x\in X}\del{ f(x) + \gamma_{t:T}(x)}$, where we denote sums by $\gamma_{1:T} = \sum_{t=1}^T \gamma_t$ with the convention $\gamma_{T+1:T} = 0$. 
We also write $\pair{y_t}{p_t} = \E_{x\~p_t} y_t(x)$.
We now state~our~first~result.
\begin{restatable}{proposition}{PropRegretDecomposition}
\label{prop:excess_regret}
We have
\[
R(p,y) = R(p,q^{(\gamma)}) + \ubr{\sum_{t=1}^T \Gamma^*_{t+1}(y_{1:t}) - \Gamma^*_t(y_{1:t-1}) - \pair{y_t}{p_t} + \E\pair[0]{\gamma_t}{p^{(\gamma)}_t}}{E_{q^{(\gamma)}}(p,y)}
\]
where $p^{(\gamma)}_t$ is a copy of the learner's strategy applied to simulated data $\gamma_{1:t-1}$ drawn from the prior $q^{(\gamma)}$ instead of the $y_{1:t-1}$. 
We call $ R(p,q^{(\gamma)})$ the \emph{prior regret} and $E_{q^{(\gamma)}}$ the \emph{excess regret}.
\end{restatable}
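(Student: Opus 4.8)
The plan is to prove this by a telescoping identity, once one unpacks the prior regret and the boundary behaviour of the conjugate regularizers $\Gamma^*_t$. First I would record two boundary identities. Using the convention $\gamma_{T+1:T} = 0$, the definition of $\Gamma^*_{T+1}$ gives $\Gamma^*_{T+1}(y_{1:T}) = \sup_{x\in X} y_{1:T}(x)$, so that, by linearity of expectation over the learner's draws $x_t\~p_t$, the deterministic-adversary regret may be written $R(p,y) = \Gamma^*_{T+1}(y_{1:T}) - \sum_{t=1}^T \pair{y_t}{p_t}$. Likewise, since conditioning on an empty history is vacuous, $\Gamma^*_1(0) = \E\sup_{x\in X}\gamma_{1:T}(x)$ where $\gamma\~q^{(\gamma)}$; and because the learner facing the virtual adversary $q^{(\gamma)}$ plays $p^{(\gamma)}_t$ after observing $\gamma_{1:t-1}$, linearity of expectation again gives $R(p,q^{(\gamma)}) = \Gamma^*_1(0) - \sum_{t=1}^T \E\pair[0]{\gamma_t}{p^{(\gamma)}_t}$, i.e.\ the best-in-hindsight value of the virtual game is exactly $\Gamma^*_1$ evaluated at the origin.

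Next I would add $E_{q^{(\gamma)}}(p,y)$ to $R(p,q^{(\gamma)})$ and simplify. The terms $\E\pair[0]{\gamma_t}{p^{(\gamma)}_t}$, which occur with a minus sign in $R(p,q^{(\gamma)})$ and a plus sign in each summand of $E_{q^{(\gamma)}}$, cancel; what remains is $\Gamma^*_1(0) + \sum_{t=1}^T\bigl(\Gamma^*_{t+1}(y_{1:t}) - \Gamma^*_t(y_{1:t-1})\bigr) - \sum_{t=1}^T\pair{y_t}{p_t}$. The middle sum telescopes: the interior terms $\Gamma^*_2(y_{1:1}),\dots,\Gamma^*_T(y_{1:T-1})$ cancel pairwise, leaving $\Gamma^*_{T+1}(y_{1:T}) - \Gamma^*_1(y_{1:0}) = \Gamma^*_{T+1}(y_{1:T}) - \Gamma^*_1(0)$ since $y_{1:0} = 0$. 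The two copies of $\Gamma^*_1(0)$ then cancel, and we are left with precisely $\Gamma^*_{T+1}(y_{1:T}) - \sum_{t=1}^T\pair{y_t}{p_t} = R(p,y)$, which is the claim. I would remark that nothing in this computation uses that $p$ is the Thompson sampling strategy: the identity holds verbatim for any learner strategy, with $\Gamma^*_t$ and $p^{(\gamma)}_t$ determined by $q^{(\gamma)}$ and $p$ — the relevance of Thompson sampling is only that then $p_t$ is the (sub)gradient of $\Gamma^*_t$ at $y_{1:t-1}$, which is what makes the summands of $E_{q^{(\gamma)}}$ collapse to Bregman-type quantities in the subsequent sections.

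The telescoping itself presents no real difficulty; the care is entirely in the bookkeeping, and that is the step I would be most attentive to. I would make explicit that $\Gamma^*_t(y_{1:t-1})$ is to be read with $(\gamma_t,\dots,\gamma_T)$ distributed as the conditional law $q^{(\gamma)}\given\gamma_1 = y_1,\dots,\gamma_{t-1} = y_{t-1}$ from \Cref{def:ts}, which is well-defined by disintegration of $q^{(\gamma)}$; that under the second-countable compact Hausdorff and continuity assumptions of \Cref{sec:preliminaries} (see also \Cref{apdx:technical}) the maps $f\mapsto\sup_{x\in X}f(x)$ are measurable and the relevant suprema integrable, so that the expectations and the interchanges of $\E$ with finite sums used above are all legitimate; and that the boundary cases $t=1$ and $t=T$ of the $\Gamma^*_t$ conventions are exactly what produce the two cancelling copies of $\Gamma^*_1(0)$ and the surviving $\Gamma^*_{T+1}(y_{1:T})$. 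With these points in place the proof is complete.
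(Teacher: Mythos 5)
Your proof is correct and is essentially the paper's own argument: the same telescoping of the $\Gamma^*_t$ terms and the same identification of $\Gamma^*_1(0) - \sum_{t=1}^T \E\pair[0]{\gamma_t}{p^{(\gamma)}_t}$ with the prior regret, merely run from right to left (verifying the decomposition sums to $R(p,y)$) rather than left to right (adding and subtracting $\E\pair[0]{\gamma_t}{p^{(\gamma)}_t}$ from the telescoped expression for $R(p,y)$). Your closing remark that the identity holds for an arbitrary learner strategy is also consistent with the paper, which imposes no assumptions on $p$ at this stage.
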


All proofs are in \Cref{apdx:technical}. 
A sketch is as follows: the idea is to apply a standard telescoping argument, similar to the more general follow-the-regularized-leader (FTRL) regret decomposition given by \textcite[Lemma 7.1]{orabona2019modern}, with two key modifications: we replace the FTRL regularizer with its convex conjugate, and additionally include the $\gamma_t$-terms given by the virtual adversary induced by the prior.
The claim follows by noting that the telescoping-sum boundary term, $\Gamma^*_t(0)$, is by definition the expected best-in-hindsight value under the virtual adversary, and~therefore its difference with $\pair[0]{\gamma_t}{p^{(\gamma)}_t}$ is equal to the expected regret incurred against the strategy $q^{(\gamma)}$.

The advantage of this decomposition is that (a) it is completely general, requiring no assumptions about the setting, virtual adversary distribution, or learning rates, and (b) even in non-discrete settings, it is not difficult to choose $q^{(\gamma)}$ in a way that makes $R(p,q^{(\gamma)})$ straightforward to upper-bound.
To see this, we now explore consequences of this decomposition: to do so, we introduce a criterion central to the work of \textcite{gravin2016towards} that a strong virtual adversary should satisfy.

\begin{restatable}{definition}{DefEqualizing}
We say $q^{(\gamma)}$ is \emph{equalizing} if, for every $t$, the conditional expectation of $\gamma_t(x)$ given the history up to time $t-1$ is almost surely constant in $x$, and call it \emph{centered} if it is equal to~zero.
\end{restatable}

\textcite{gravin2016towards} refer to equalizing adversaries as \emph{balanced}: we instead use the more-standard term from game theory---see \textcite[Chapter 11]{straffin2010game}.
To ease notation in what follows, we prove that if $q^{(\gamma)}$ is independent across time, we can take $q^{(\gamma)}$ to be centered without loss~of~generality.

\begin{restatable}{lemma}{LemCentering}
Given an online learning game and an equalizing adversary $q^{(\gamma)}$ supported on $Y$ which is independent across time, there exists an equivalent online learning game with centered equalizing adversary $q^{(\gamma')}$ supported on $Y'$.
\end{restatable}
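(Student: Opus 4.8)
The plan is to reduce to a centered adversary by subtracting from $\gamma_t$, at each round, its conditional mean, and to observe that subtracting constant functions from the rewards round-by-round changes no regret. The hypotheses are exactly what is needed to make this subtraction by a \emph{deterministic} quantity.

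First I would record that the centering shifts are deterministic constants. By the equalizing hypothesis, $\E[\gamma_t(x) \mid \gamma_1,\dots,\gamma_{t-1}]$ is well-defined and, almost surely, constant in $x$; since $q^{(\gamma)}$ is independent across time this conditional mean agrees almost surely with the unconditional one, so there is a fixed number $c_t \in \R$ with $\E\gamma_t(x) = c_t$ for every $x \in X$. (Finiteness of $c_t$ is implicit in the equalizing condition being meaningful, and in any case holds since $Y \subseteq C(X;\R)$ with $X$ compact. When $\gamma_1,\dots,\gamma_T$ are i.i.d.\ one even gets $c_t \equiv c$; in general the $c_t$ may differ.)

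Second I would prove regret-invariance of round-wise constant shifts. Fix deterministic $a_1,\dots,a_T \in \R$ and, at round $t$, replace each reward $y$ by the function $y(\cdot) - a_t$. For any learner strategy $p$ and any realized adversary sequence $(y_t)$, both $\sup_{x\in X}\sum_{t=1}^T\del{y_t(x) - a_t} = \sup_{x\in X}\sum_{t=1}^T y_t(x) - \sum_{t=1}^T a_t$ and $\sum_{t=1}^T\del{y_t(x_t) - a_t} = \sum_{t=1}^T y_t(x_t) - \sum_{t=1}^T a_t$, so the regret $\sup_{x}\sum_t y_t(x) - \sum_t y_t(x_t)$ is unchanged for every realization, hence in expectation. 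Because the shift at round $t$ depends only on $t$, it is an invertible, information-preserving relabelling of the game: a learner strategy acting on histories $(y_1,\dots,y_{t-1})$ corresponds to one acting on $(y_1 - a_1,\dots,y_{t-1} - a_{t-1})$, and the affine (hence Borel) map $y \mapsto y - a_t$ pushes a distribution on $Y$ forward to a distribution on $Y - a_t$. Thus the game with reward set $Y$ is equivalent --- same minimax value, regret-preserving correspondence of strategies, and $q^{(\gamma)}$ mapped to its pushforward --- to the game with reward set $Y' = \mathrm{conv}\bigcup_{t=1}^T(Y - a_t)$, which is a convex subset of $C(X;\R)$ (and may additionally be taken closed; since only finitely many translates of $Y$ are involved and $X$ is compact, this leaves everything unchanged).

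Finally I would instantiate $a_t = c_t$. The pushforward $q^{(\gamma')}$ of $q^{(\gamma)}$ along the round-wise shift $\gamma_t \mapsto \gamma_t - c_t$ is supported on $Y' = \mathrm{conv}\bigcup_t(Y - c_t)$, still has factorizing joint law (a product measure through a product of maps), hence is independent across time, and satisfies $\E[\gamma'_t(x) \mid \gamma'_1,\dots,\gamma'_{t-1}] = \E\gamma_t(x) - c_t = 0$ for all $x$ --- so it is centered, in particular equalizing, completing the construction. The only point I expect to require care is conceptual rather than computational: pinning down precisely what ``equivalent online learning game'' should mean when the constants $c_t$ genuinely vary across rounds, so that $Y'$ is a convex hull of several translates of $Y$ rather than a single translate, and checking that the regret-preserving correspondence (together with measurability, convexity, and, if desired, compactness/closedness of $Y'$) survives intact; in the i.i.d.\ special case $Y'$ collapses to the single translate $Y - c$ and the argument is immediate.
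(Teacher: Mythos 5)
Your proposal is correct and matches the paper's approach exactly: the paper's entire proof is the one-line observation that one subtracts the respective (deterministic, by equalization plus independence across time) means at each round, and your write-up simply fills in the details of why this preserves regret and the product structure. No gap; your version is just the fully elaborated form of the same argument.
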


If the adversary plays an equalizing strategy, the learner's expected regret does not depend on their choice of algorithm $p_t$.
Thus, the existence of an equalizing adversary $q^{(\gamma)}$ implies a regret lower bound---a view we now formalize.

\begin{restatable}{proposition}{PropLowerBound}
\label{prop:lower_bound}
Let $q^{(\gamma)}$ be equalizing, centered, and supported on at most $Y$. Then 
\[
\label{eqn:lower_bound}
\E \sup_{x\in X}\sum_{t=1}^T \gamma_t(x) = R(\.,q^{(\gamma)}) \leq \min_p \max_q R(p,q)
.
\]
\end{restatable}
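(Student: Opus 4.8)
The plan is to establish that, against the equalizing centered adversary $q^{(\gamma)}$, the learner's accumulated reward $\sum_{t=1}^T \gamma_t(x_t)$ has expectation zero regardless of which strategy $p$ the learner plays; this collapses $R(p,q^{(\gamma)})$ to $\E\sup_{x\in X}\sum_{t=1}^T\gamma_t(x)$, and the inequality then follows because the minimax value is at least the value against any single fixed adversary strategy.

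Concretely, I would fix an arbitrary learner strategy $p = (p_1,\dots,p_T)$ and write $\c{F}_{t-1}$ for the information available at the start of round $t$, namely the realized actions $x_1,\dots,x_{t-1}$ and reward functions $\gamma_1,\dots,\gamma_{t-1}$. In the game of \Cref{sec:preliminaries} the learner and the adversary choose their round-$t$ moves using only this information---in particular the adversary commits to $\gamma_t$ without observing $x_t$---so $x_t$ and $\gamma_t$ are conditionally independent given $\c{F}_{t-1}$. The equalizing-and-centered hypothesis says precisely that $\E[\gamma_t(x)\mid\c{F}_{t-1}] = 0$ almost surely for every fixed $x\in X$, so, by conditional independence, $\E[\gamma_t(x_t)\mid\c{F}_{t-1}] = 0$ almost surely as well. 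Summing over $t$ and taking expectations gives $\E\sum_{t=1}^T\gamma_t(x_t) = 0$, hence $R(p,q^{(\gamma)}) = \E\sup_{x\in X}\sum_{t=1}^T\gamma_t(x)$, which has no dependence on $p$. This justifies writing $R(\cdot,q^{(\gamma)})$ and proves the equality in \cref{eqn:lower_bound}. For the inequality, observe that $q^{(\gamma)}$ is a feasible adversary strategy since it is supported on $Y$, so $\max_q R(p,q)\ge R(p,q^{(\gamma)})$ for every $p$; taking the minimum over $p$ and using that $R(\cdot,q^{(\gamma)})$ is constant in $p$ yields $\min_p\max_q R(p,q)\ge R(\cdot,q^{(\gamma)})$.

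The argument is short, and the only real work is measure-theoretic: \Cref{apdx:technical} must pin down the filtration with respect to which ``equalizing'' is asserted and check it is not disturbed by also conditioning on the learner's independent randomization, verify the conditional independence of $x_t$ and $\gamma_t$ in the formal model of the game for both oblivious and adaptive adversaries, and confirm that $\gamma\mapsto\sup_{x\in X}\sum_{t=1}^T\gamma_t(x)$ is measurable---it is in fact attained, since $X$ is compact and $Y\subseteq C(X;\R)$---and integrable, so that the tower property and the cancellation $\E\sum_{t=1}^T\gamma_t(x_t)=0$ are legitimate. These are exactly the points controlled by the standing topological assumptions of \Cref{sec:preliminaries}, and they constitute the only real obstacle, which is one of bookkeeping rather than of substance.
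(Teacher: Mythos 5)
Your proof is correct and follows the same route as the paper: the inequality comes from $q^{(\gamma)}$ being a feasible adversary strategy, and the conclusion follows by taking the infimum over $p$ and using that $R(\cdot,q^{(\gamma)})$ is constant in $p$. The only difference is that the paper simply invokes this constancy (asserted earlier in the text), whereas you actually derive it from the centered-equalizing hypothesis together with the conditional independence of $x_t$ and $\gamma_t$ given the history---a detail worth having, and correctly handled.
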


This statement merits two observations:
\1 In choosing a prior for Thompson sampling, we should define a \emph{strong} adversary to be one which is equalizing, and for which the inequality in \Cref{eqn:lower_bound} is tight up to a constant.
\2 For a strong virtual adversary in this sense, if $E_{q^{(\gamma)}}(p,y)$ is of the same order as $R(p,q^{(\gamma)})$ for all $y$, then Thompson sampling achieves the minimax rate.
\0 
\Cref{prop:excess_regret,prop:lower_bound} therefore provide an ansatz for defining good Thompson sampling priors, and a blueprint for transforming regret lower bounds into upper bounds.
To verify that such a blueprint can work in principle, consider the discrete setting $X = [N]$ and $Y = \{y\in\ell^\infty(X) : \norm{y}_\infty \leq 1\}$.
Taking $q^{(\gamma)}$ to consist of IID Rademacher random variables for each expert leads to $R(\.,q^{(\gamma)}) = \Theta(\sqrt{T\log N})$ by standard arguments---indeed, tight upper bounds on $R(\.,q^{(\gamma)})$ can be proven even in infinite-expert settings by chaining techniques \cite{talagrand2005generic}.
Thus, if one can prove that $E_{q^{(\gamma)}}(p,y) \leq \c{O}(\sqrt{T\log N})$---or, in more general settings, whatever the corresponding rate is---the aforementioned lower bound transforms into a minimax-rate upper bound.

\subsection{A Bregman Divergence Bound on Excess Regret}

In the preceding section, we developed a general decomposition of Thompson sampling's minimax regret.
The terms in this decomposition can be interpreted as (i) the regret the learner expects to incur under the prior, and (ii) an excess regret term, which quantifies how much worse---or better---the learner will do when faced against the true adversary.
Bounding term (i) is straightforward, so we now focus on term (ii).
Taking inspiration from the finite-dimensional setting where FTPL---and hence Thompson sampling---can be understood as FTRL with regularizer sequence $\Gamma_t$, for which the respective Fenchel conjugates are the functions $\Gamma^*_t$ defined previously, we now prove that the excess regret can be bounded term-by-term by Bregman divergences $D_{\Gamma^*_t}$ defined with respect to $\Gamma^*_t$.

\begin{restatable}{proposition}{PropBregmanBound}
Assume $q^{(\gamma)}$ is equalizing, independent across time, and ties almost never occur.~Then
\[
E_{q^{(\gamma)}}(p,y) \leq \sum_{t=1}^T D_{\Gamma^*_t}(y_{1:t} \from y_{1:t-1})
.
\]
\end{restatable}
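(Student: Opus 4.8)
The plan is to match the $t$-th summand of the excess-regret formula in \Cref{prop:excess_regret} against the Bregman divergence
\[
D_{\Gamma^*_t}(y_{1:t}\from y_{1:t-1}) = \Gamma^*_t(y_{1:t}) - \Gamma^*_t(y_{1:t-1}) - \pair{\nabla\Gamma^*_t(y_{1:t-1})}{y_t}
,
\]
using $y_{1:t} - y_{1:t-1} = y_t$, and to show the per-round discrepancy between them is nonpositive. Subtracting the $t$-th term of $E_{q^{(\gamma)}}(p,y)$ from the display above and cancelling the two copies of $\Gamma^*_t(y_{1:t-1})$, what remains is
\[
\del{\Gamma^*_t(y_{1:t}) - \Gamma^*_{t+1}(y_{1:t})} + \del{\pair{y_t}{p_t} - \pair{\nabla\Gamma^*_t(y_{1:t-1})}{y_t}} - \E\pair{\gamma_t}{p^{(\gamma)}_t}
.
\]
Hence it suffices to establish (a) $\nabla\Gamma^*_t(y_{1:t-1}) = p_t$, which annihilates the middle bracket, and (b) $\Gamma^*_t(y_{1:t}) - \Gamma^*_{t+1}(y_{1:t}) \geq \E\pair{\gamma_t}{p^{(\gamma)}_t}$; summing the resulting per-round nonnegativity over $t$ then yields $E_{q^{(\gamma)}}(p,y) \leq \sum_{t=1}^T D_{\Gamma^*_t}(y_{1:t}\from y_{1:t-1})$.

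For (a), I would invoke the stochastic-smoothing differentiation identity: since ties almost never occur, the maximizer of $f + \gamma_{t:T}$ over $X$ is almost surely unique, so the convex functional $f\mapsto\Gamma^*_t(f) = \E\sup_{x\in X}\del{f(x) + \gamma_{t:T}(x)}$ is Gateaux differentiable, with gradient (an element of the dual space of measures on $X$) equal to the law of $\argmax_{x\in X}\del{f(x) + \gamma_{t:T}(x)}$. Evaluating at $f = y_{1:t-1}$ and using independence across time to discard the conditioning $\gamma_{1:t-1} = y_{1:t-1}$ in \Cref{eqn:ts}, this law is exactly the Thompson sampling strategy $p_t$, so $\pair{\nabla\Gamma^*_t(y_{1:t-1})}{y_t} = \pair{y_t}{p_t}$. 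For (b), let $c_t$ denote the value of $x\mapsto\E\gamma_t(x)$, which the equalizing property, together with independence across time, makes constant in $x$. Since $p^{(\gamma)}_t$ is a measurable function of $\gamma_{1:t-1}$ and of the fresh perturbation sample drawn at round $t$, both of which are independent of $\gamma_t$, Fubini's theorem gives $\E\pair{\gamma_t}{p^{(\gamma)}_t} = c_t$. On the other side, writing $\gamma_{t:T} = \gamma_t + \gamma_{t+1:T}$ as a sum of independent blocks and applying Jensen's inequality to the $\gamma_t$-block conditionally on $\gamma_{t+1:T}$ (the supremum of a mean is at most the mean of the supremum) gives
\[
\Gamma^*_t(y_{1:t}) \geq c_t + \E\sup_{x\in X}\del{y_{1:t}(x) + \gamma_{t+1:T}(x)} = c_t + \Gamma^*_{t+1}(y_{1:t})
,
\]
which is exactly (b). Alternatively, one may first pass to the centered equivalent game via the centering lemma above, so that $c_t = 0$ throughout.

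The step I expect to be the main obstacle is (a): making the identity $\nabla\Gamma^*_t(y_{1:t-1}) = p_t$ rigorous in the general-action-space setting. In finite dimensions this is the familiar differentiation-of-the-smoothed-max fact underlying the FTPL-as-FTRL correspondence, but here $\Gamma^*_t$ is a convex functional on a space of continuous functions, its gradient must be interpreted as a probability measure on $X$, and one must justify exchanging differentiation with the expectation over $\gamma_{t:T}$; the hypothesis that ties almost never occur is precisely what supplies the almost-sure uniqueness of the argmax needed for this. Once (a) is in hand, the remaining content is the short Fubini and Jensen computations above, and, as with \Cref{prop:excess_regret}, no learning-rate or discreteness restrictions enter at any point.
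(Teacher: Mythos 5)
Your proof is correct, and it takes a genuinely different route from the paper's. The paper establishes the same term-by-term bound $E_t \leq D_{\Gamma^*_t}(y_{1:t} \from y_{1:t-1})$ by passing to the dual: after using equalization and the tower rule $\Gamma^*_t(\cdot) = \E_{\gamma_t}\Gamma^*_{t+1}(\cdot + \gamma_t)$, it identifies a primal divergence $D_{\Gamma^*_{t+1}}(y_{1:t-1}+\gamma_t \from y_{1:t})$, converts it to the dual divergence $D_{\Gamma^{**}_{t+1}}$ via Fenchel biconjugation, and cancels it using Young's inequality for $\ell \mapsto D_{\Gamma^{**}_{t+1}}(\ell \from p_{y_{1:t-1}+\gamma_t})$. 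Your argument bypasses the biconjugate, the dual divergence, and Young's inequality entirely, needing only (a) the gradient identity $\p_{y_t}\Gamma^*_t(y_{1:t-1}) = \pair{y_t}{p_t}$---which is precisely \Cref{lem:derivative}, proved via the envelope theorem in \Cref{apdx:technical}, so the obstacle you flag is already resolved there and is equally needed by the paper's own proof in its final lines---and (b) the one-line Jensen bound $\Gamma^*_t(y_{1:t}) \geq \Gamma^*_{t+1}(y_{1:t}) + \E\pair[0]{\gamma_t}{p^{(\gamma)}_t}$, where equalization and independence across time identify both correction terms with the common constant $\E\gamma_t(x)$. (A minor quibble: $p^{(\gamma)}_t$ as a measure is a function of $\gamma_{1:t-1}$ alone; your Fubini step goes through by the conditional independence of $\gamma_t$ from the history either way.) Since both proofs connect the same two quantities with exactly one inequality step each, the slack is necessarily identical: your Jensen step is the primal face of the paper's dual Young's-inequality step. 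What the longer dual route buys is expository rather than logical---it exhibits the primal regularizer $\Gamma^{**}_t$ and the dual Bregman divergence explicitly, which underpins the paper's comparison with the local-norm FTRL analysis and its discussion of learning-rate restrictions---but your shorter argument proves the proposition just as well.
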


Here, \emph{ties} refer to situations where the optimization problem defining Thompson sampling admits more than one optimum---a probability-zero event in the examples we consider.
To show this result, the idea is to (a) add-subtract $\pair{y_t - \gamma_t}{p_{t+1}}$, (b) use the tower rule to write $\Gamma^*_t(\.) = \E_{\gamma_t\~q^{(\gamma)}_t} \Gamma^*_{t+1}(\. + \gamma_t)$ to ensure $\Gamma^*$-functions for the same summand have matching time indices, (c) use Bregman duality to rewrite the resulting terms as a Bregman divergence in primal form (up to technicalities), then (d) apply Young's inequality for the convex function $p \mapsto D_{\Gamma_{t+1}}(p \from p_{y_{1:t-1} + \gamma_t})$, thereby canceling out the primal Bregman divergence, which is otherwise difficult to understand because $\Gamma_t$ is only defined implicitly.

The advantage of this bound is that it is completely general, requiring minimal assumptions on the setting and virtual adversary.
The disadvantage is that it is non-negative, whereas the excess regret can be negative for a sequence of reward functions $y_t$ which is not worst-case.
We believe it might be possible to relax the assumptions further, at the cost of a more-involved analysis: we omit this, given our aim of understanding non-discrete settings, and because the virtual adversaries we consider satisfy these assumptions.

One can interpret the resulting Bregman divergences probabilistically by explicitly writing out the $\Gamma^*_t$-terms.
Introducing the notation $x^*_f = \argmax_{x\in X} f(x)$, with ties broken according to the tie-breaking rule chosen as part of Thompson sampling's definition, this~gives
\[
\sum_{t=1}^T D_{\Gamma^*_t}(y_{1:t} \from y_{1:t-1}) = \E \sum_{t=1}^T \del{(y_{1:t} + \gamma_{t:T})(x^*_{y_{1:t} + \gamma_{t:T}}) - (y_{1:t} + \gamma_{t:T})(x^*_{y_{1:t-1} + \gamma_{t:T}})}
\]
where each summand is the difference between the posterior mean total reward of the random best expert, and posterior mean total reward of the best expert if $y_t$ were replaced with $0$---this is analogous to the comparison between the be-the-regularized-leader and follow-the-regularized leader interpretations of FTRL's regret bounds.
This expression should be contrasted with the excess regret itself, which can analogously be written
\[
E_{q^{(\gamma)}}(p,y) = \E \sum_{t=1}^T \del{(y_{1:t} + \gamma_{t+1:T})(x^*_{y_{1:t} + \gamma_{t+1:T}}) - (y_{1:t} + \gamma_{t:T})(x^*_{y_{1:t-1} + \gamma_{t:T}})}
\]
and whose interpretation is similar, but where the function being evaluated in the first term no longer matches that of the second term.
This seemingly-subtle difference, from a probabilistic view, has significant consequences from a convex-analytic one: using the integral form of Taylor's Theorem for convex functions, one can write the resulting Bregman divergences as
\[\label{eq:bregman-hessian-bd}
D_{\Gamma^*_t}(y_{1:t} \from y_{1:t-1}) = \frac{1}{2} \int_0^1 \p^2_{y_t,y_t} \Gamma^*_t(y_{1:t-1} + \alpha y_t) \d\alpha
\]
where $\p^2_{(\.,\.)} \Gamma^*_t$ is the (distributional) second Gâteaux derivative.
This expression is an integral analog of certain terms arising in the local norm analysis of \textcite{orabona2019modern}, with one key difference: our Hessian terms are defined with respect to the convex conjugate $\Gamma^*_t$ instead of the regularizer $\Gamma_t$.
To get an intuitive sense for this difference, consider what would occur for $\Gamma(p) = D_{\f{KL}}(p \from p_0)$.
Then, for most signed measures, the first Gâteaux derivative $\p_u \Gamma(p)$ would not always be finite-valued, and one would need to restrict to signed measures with explicit density bounds.
In contrast, $\Gamma^*(f) = \log \E_{x\~p_0} \exp f(x)$, for which $\p_v \Gamma^*(f)$ is finite-valued for all bounded measurable $f,v$.
Given this behavior for first derivatives, one anticipates that Hessians of the convex conjugate $\Gamma^*_t$ will be easier to work~with in this case.

\subsection{Adversarial Guarantees via Gaussian Process Priors}
\label{sec:gaussian_adversary}

The preceding sections provide a Bayesian way of thinking about the online learning game, which yields a recipe for constructing concrete online learning algorithms: choose a strong equalizing adversary as the prior.
Such a virtual adversary should reflect the function class $Y$, and, at the same time, be amenable to analysis.

A natural starting point for constructing a strong adversary is, at every time point, to randomly sample from a set of worst-case reward functions supported at the extreme points of $Y$.
Developing this approach naturally leads to Rademacher-complexity-theoretic lower bounds via sequential covers \cite{sridharan2010convex,rakhlin2012relax}, but not directly to an analytically-tractable Hessian for \Cref{eq:bregman-hessian-bd}.

Assuming $Y$ consists of bounded functions, observe that under mild regularity conditions, the random sum obtained in this manner converges to a Gaussian process, which we show in the sequel does have an analytically-tractable Hessian.
The expected suprema of sums of independent Gaussians, in turn, are known to be of the same order as that of independent Rademachers, suggesting that a Gaussian virtual adversary constructed as a limit of strong virtual adversaries should remain strong.
We therefore adopt Gaussian process virtual adversaries $\gamma_t\~[GP](0,k)$, where $k$ is the covariance kernel, as our ansatz: to analyze their performance, we begin with prior regret.

\begin{restatable}{lemma}{LemPrior}
Let $q^{(\gamma)}_t = \f{GP}(0,k)$, assumed IID over time.~Then
\[
R(\cdot, q^{(\gamma)}) = \E\sup_{x\in X}\sum_{t=1}^T \gamma_t(x) = \sqrt{T} \E\sup_{x\in X} \gamma_1(x)
.
\]
\end{restatable}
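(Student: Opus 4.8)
The plan is to combine \Cref{prop:lower_bound} with two elementary closure properties of Gaussian processes: closure under finite summation and under deterministic scaling. For the first equality, I would first note that, since $q^{(\gamma)}$ is independent across time with each $\gamma_t \~ \f{GP}(0,k)$ centered, the conditional law of $\gamma_t$ given $\gamma_{1:t-1}$ is again $\f{GP}(0,k)$, whose mean function vanishes identically; hence $q^{(\gamma)}$ is equalizing and centered. Assuming its sample paths lie in $Y$---which is needed in any case for it to be a valid adversary strategy---\Cref{prop:lower_bound} then gives $R(\cdot, q^{(\gamma)}) = \E \sup_{x\in X} \sum_{t=1}^T \gamma_t(x)$ at once. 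The supremum here is attained and measurable, since $X$ is second-countable compact Hausdorff and the $\gamma_t$ are continuous, so $\sup_{x\in X}$ may be replaced by the supremum over any fixed countable dense subset of $X$.

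For the second equality, the key observation is that $S_T = \sum_{t=1}^T \gamma_t$, being a finite sum of independent centered Gaussian processes, is itself a centered Gaussian process, with covariance $\mathrm{Cov}(S_T(x), S_T(x')) = \sum_{t=1}^T k(x,x') = T\, k(x,x')$; whereas, for the constant $c = \sqrt{T}$, the process $c\,\gamma_1$ is a centered Gaussian process with covariance $c^2 k(x,x') = T\, k(x,x')$. Thus $S_T$ and $\sqrt{T}\,\gamma_1$ have identical finite-dimensional distributions. Because both have continuous sample paths on the second-countable compact space $X$, this upgrades to equality of their laws as random elements of $C(X;\R)$---whose Borel $\sigma$-algebra is generated by the evaluation maps along a countable dense set---so in particular $\sup_{x\in X} S_T(x)$ is equal in distribution to $\sqrt{T}\,\sup_{x\in X}\gamma_1(x)$. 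Taking expectations yields $\E\sup_{x\in X}\sum_{t=1}^T\gamma_t(x) = \sqrt{T}\,\E\sup_{x\in X}\gamma_1(x)$. Both sides are well-defined in $[0,\infty]$; finiteness, when desired, follows from a Dudley or Borell--TIS bound, but is not needed for the identity.

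The only genuinely delicate step is the passage from equality of finite-dimensional distributions to equality in distribution of the supremum, which is exactly why I would emphasize the reduction of $\sup_{x\in X}$ to a supremum over a countable dense set: this makes the supremum a measurable functional of the finite-dimensional marginals, so equality of those marginals transfers to it. Everything else is the routine fact that the covariance of a sum of independent centered Gaussian processes is the sum of the covariances, and that scaling a centered Gaussian process by $c$ scales its covariance by $c^2$.
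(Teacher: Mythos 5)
Your argument is correct and is exactly the computation the paper leaves as ``Immediate'': the centered, time-independent prior makes the learner's expected reward term vanish so that the prior regret reduces to $\E\sup_{x\in X}\gamma_{1:T}(x)$, and the sum of $T$ IID centered Gaussian processes with kernel $k$ equals $\sqrt{T}\,\gamma_1$ in law, with the supremum handled via a countable dense subset of $X$. One small remark: the hypothesis that the sample paths lie in $Y$ is not needed for either equality (only for the lower-bound inequality in \Cref{prop:lower_bound}), and indeed the paper notes that the Gaussian priors used later are generally \emph{not} supported on $Y$, so it is better to invoke only the centered, equalizing, and time-independence properties here.
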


Note that, in infinite dimension, a Gaussian virtual adversary constructed as a limit of random sums supported on $Y$ in general \emph{will not} be supported on $Y$.
As a result, such an adversary does not immediately certify a regret lower bound---though in the vast majority of cases, we expect a lower bound of the same rate to follow.
We note that lower bounds on suprema of Gaussian processes can be derived generically using Fernique--Sudakov minorization, while lower bounds of general random processes can require specialized techniques.

We now proceed to study excess regret, beginning with writing down the Gâteaux Hessian. 
By applying Gaussian integration by parts, which one can derive in the infinite-dimensional setting from the Cameron--Martin Theorem, one can show
\[
\p^2_{u,v}\Gamma^*_t(f) = \frac{1}{\sqrt{T-t+1}} \E u(x^*_{f+\gamma_{t:T}}) \pair{\gamma_{t:T}}{\c{K}^{-1} v}
\]
where $\c{K}^{-1}$ is the inverse (on appropriate subsets) of the \emph{covariance operator} $\c{K} : \c{M}_s(X) \to C(X;\R)$ defined by $\mu\to \int_X k(x,\.) \d\mu(x)$, and also known as the \emph{kernel mean embedding}.
Rather than making this expression precise in the literal sense, to ease notation and reduce the need for heavy analytic machinery, our argument will first pass to a finite cover $X_h \subseteq X$, where $h>0$ is the radius.
This enables us to work with finite-dimensional matrices: after bounding the resulting expressions, we then take a limit $h\to0$, obtaining the same rates as if we had worked directly in infinite dimension---but without the need to prove invertability of $\c{K}^{-1}$ or deal with other measure-theoretic subtleties.
We emphasize this discretization is merely an analytic tool for studying the continuous algorithm, in contrast to other methods that operate by applying discrete algorithms on a discretization of a continuous space. 
Using \Cref{lem:cover}, the Hessian can be approximated by its analog on a cover, namely
\[
\grad^2 \Gamma^*_{h,t}(f) = \frac{1}{\sqrt{T-t+1}}\E \1_{i^*_{f_h + \gamma_{h,t:T}}} \gamma_{h,t:T}^T K^{-1}_h
\]
where $\Gamma^*_{h,t}$ is a finite-dimensional analog of $\Gamma^*_t$, the matrix $K_h$ is defined by $K_{ij} = k(x_i, x_j)$ for elements $x_i$ and $x_h$ of the cover, and $\1_j$ is a vector which is $1$ at entry $j$ and zero everywhere else, and $i^*_{f_h + \gamma_{h,t:T}}$ is the maximizer of the respective sum on the cover.
For the case $K=I$, \textcite{abernethy14} bounds $y^T_h \grad^2 \Gamma^*_{h,t}(f) y_h$ through a specialized inequality that relates the $L(\infty,1)$-norm of the Hessian to its trace.
To our knowledge, no suitable extension of this inequality to the case where $K$ is not a multiple of the identity is known, and almost every other approach we tried---without an appropriate analogue---leads to bounds which are typically loose by a factor of $|X_h|$, and therefore achieve vastly suboptimal rates.

To overcome this, the idea will be to seek a probabilistic proof of a comparable inequality, rather than a linear-algebraic one.
At a high level, the idea is to condition on the maximizer index $i^*_{f_h+\gamma_{h,t:T}}$, and apply the Tower Rule: one can show conditioning both on the argmax and max results in an inner conditional expectation representing a truncated multivariate normal distribution, which takes a certain explicit form.
By manipulating these conditional expectations in the right manner, we arrive at a simple condition which relates the kernel $k$ to the adversary's function class $Y$, and leads to a bound on $y^T_h \grad^2 \Gamma^*_{h,t}(f) y_h$.
Taking a limit as $h\to0$ to pass back to the infinite-dimensional world, we obtain the following general bound on the Bregman divergence.

\begin{restatable}{theorem}{ThmHessianBound}
\label{thm:bregman_hessian_bound}
Suppose that $q^{(\gamma)} = \f{GP}(0,k)$ is IID over time, where $k$ is continuous, strictly positive definite, and gives rise to a global expected modulus of continuity.
Suppose there is a constant $C_{Y,k}>0$ such that for any $x,x'\in X$ we have
\[
\sup_{y\in Y} y(x) - y(x')\frac{k(x,x')}{k(x',x')} \leq C_{Y,k} \del{1 - \frac{k(x,x')}{k(x',x')}}
.
\]
Then, letting $\xi_{t,\theta} = x^*_{y_{1:t-1} + \theta y_t + \gamma_{t:T}}$, we have
\[
D_{\Gamma^*_t}(y_{1:t} \from y_{1:t-1})\leq \frac{1}{2\sqrt{T-t+1}} \int_0^1 \E\del{ \del{y(\xi_{t,\theta})^2 + C_{Y,k} |y(\xi_{t,\theta})|}\frac{\gamma_{t:T}(\xi_{t,\theta})}{k(\xi_{t,\theta},\xi_{t,\theta})} }\d\theta
.
\]
\end{restatable}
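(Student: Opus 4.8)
The plan is to establish the bound at the level of the Gâteaux Hessian. Writing $y := y_t$ as in the statement and $f := y_{1:t-1}+\theta y_t$, the integral Taylor identity \Cref{eq:bregman-hessian-bd} reduces the claim to the pointwise‑in‑$\theta$ inequality
\[
\p^2_{y,y}\Gamma^*_t(f)\leq\frac{1}{\sqrt{T-t+1}}\,\E\del{\del{y(\xi_{t,\theta})^2+C_{Y,k}|y(\xi_{t,\theta})|}\frac{\gamma_{t:T}(\xi_{t,\theta})}{k(\xi_{t,\theta},\xi_{t,\theta})}}
.
\]
To sidestep infinite‑dimensional technicalities — the Hessian being, as noted above, only a distributional Gâteaux derivative, and $\c{K}^{-1}$ being delicate — I would pass to a finite $h$-cover $X_h\subseteq X$ and reduce, by \Cref{lem:cover}, to its discrete analog
\[
\p^2_{y,y}\Gamma^*_{h,t}(f)=y_h^\top\grad^2\Gamma^*_{h,t}(f)\,y_h=\frac{1}{\sqrt{T-t+1}}\,\E\del{y(x_{i^*})\,\gamma_{h,t:T}^\top K_h^{-1}y_h},\qquad i^*:=i^*_{f+\gamma_{h,t:T}},
\]
where $y_h,f$ are restrictions to $X_h$, $K_h=(k_{ij})$ with $k_{ij}:=k(x_i,x_j)$ is invertible by strict positive‑definiteness of $k$, and the argmax is almost surely unique by the positive‑definiteness and modulus‑of‑continuity hypotheses. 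All expectations are finite (Gaussian factor, bounded remainder) and the $\theta$-integral interchanges with them by Fubini. The cover analog to prove, before sending $h\to0$, is $\E[y(x_{i^*})\gamma_{h,t:T}^\top K_h^{-1}y_h]\leq\E\del{\del{y(x_{i^*})^2+C_{Y,k}|y(x_{i^*})|}\,\gamma_{h,t:T}(x_{i^*})/k(x_{i^*},x_{i^*})}$.

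Let $G:=\gamma_{h,t:T}$, which is centered Gaussian with covariance $(T-t+1)K_h$ as a sum of $T-t+1$ independent copies of $\f{GP}(0,k)$ on $X_h$, and $c:=K_h^{-1}y_h$; the quantity to control is $\E[y(x_{i^*})\,G^\top c]$. The crucial step is to condition jointly on the argmax index $i^*=j$ \emph{and} on $G_j$ — that is, on both the location and the value of the maximum of $f+G$ over $X_h$. Conditionally, $G_j=g$ is a constant, and $G_{-j}=(k_{ij}/k_{jj})_{i\ne j}\,g+W$, where $W$ is centered Gaussian of covariance $(T-t+1)\Sigma$, $\Sigma_{ii'}:=k_{ii'}-k_{ij}k_{i'j}/k_{jj}$ the (positive‑definite) Schur complement, conditioned to the orthant $\cbr{W_i\leq\tau_i(g)\ \forall i\ne j}$, with $\tau_i(g):=g\,w_{ij}+f(x_j)-f(x_i)$ and $w_{ij}:=1-k_{ij}/k_{jj}$. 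Reading off the $j$-th row of $K_hc=y_h$ gives $c_j+k_{jj}^{-1}\sum_{i\ne j}k_{ij}c_i=y(x_j)/k_{jj}$, hence conditionally $G^\top c=g\,y(x_j)/k_{jj}+W^\top c_{-j}$. By the tower rule, the first summand contributes exactly $\E[y(x_{i^*})^2\,G_{i^*}/k(x_{i^*},x_{i^*})]$, which is the first half of the target, so it remains to bound $\E[y(x_{i^*})\,W^\top c_{-i^*}]\leq C_{Y,k}\,\E[|y(x_{i^*})|\,G_{i^*}/k(x_{i^*},x_{i^*})]$.

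Here the hypothesis enters, via the identity obtained by eliminating $c_j$ from $K_hc=y_h$: $(\Sigma c_{-j})_i=y(x_i)-\frac{k_{ij}}{k_{jj}}y(x_j)$ for $i\ne j$ — exactly the quantity the hypothesis controls, so (applying it also to $-y_t$, which lies in $Y$ since the classes we consider are symmetric) $|(\Sigma c_{-j})_i|\leq C_{Y,k}w_{ij}$, and the hypothesis itself forces $w_{ij}\geq0$. Gaussian integration by parts — the divergence theorem for Gaussian measure, extended to the non‑smooth orthant indicator by mollification — yields $\E[W^\top c_{-j}\,\1_{W\leq\tau(g)}]=-(T-t+1)\sum_{i\ne j}(\Sigma c_{-j})_i\,r_i(g)$ and $\p_g\Pr(W\leq\tau(g))=\sum_{i\ne j}w_{ij}\,r_i(g)$, with $r_i(g)\geq0$ the surface density of the $i$-th face of the orthant; dividing by $\Pr(i^*=j\mid G_j=g)$ and using the bound on $|(\Sigma c_{-j})_i|$ gives $|\E[W^\top c_{-j}\mid i^*=j,\,G_j=g]|\leq C_{Y,k}(T-t+1)\,\p_g\log\Pr(i^*=j\mid G_j=g)$. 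Finally, bounding $y(x_{i^*})\,W^\top c_{-i^*}\leq|y(x_{i^*})|\,|W^\top c_{-i^*}|$, taking the tower rule over $G_j$, and integrating $\p_g\Pr(i^*=j\mid G_j=g)$ by parts against the density of $G_j$ — whose score is $-g/((T-t+1)k_{jj})$ — the factor $(T-t+1)\,\E[\p_g\Pr(i^*=j\mid G_j=g)|_{g=G_j}]$ collapses to $\E[\1_{i^*=j}\,G_j/k_{jj}]$, the powers of $T-t+1$ cancelling; summing over $j$ and using $\E[\1_{i^*=j}G_j]\geq0$ gives the desired bound on $\E[y(x_{i^*})\,W^\top c_{-i^*}]$. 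Combined with the first‑term contribution this proves the cover analog; letting $h\to0$, using \Cref{lem:cover} and the global‑modulus‑of‑continuity hypothesis to pass the argmax, the Gaussian supremum, and hence the Hessian to the limit, gives the theorem.

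The step I expect to be the main obstacle is the one that surfaces the kernel. Conditioning on the argmax index alone — as in the $K_h=I$ analysis of \textcite{abernethy14} — only exposes the pairwise increments $y(x_j)-y(x_i)$, which carry no information about $k$ and give bounds loose by a factor $|X_h|$; it is the \emph{additional} conditioning on the value of the maximum that turns these increments into the regression residuals $y(x_i)-\frac{k_{ij}}{k_{jj}}y(x_j)$ matching the hypothesis, and this manoeuvre has no linear‑algebraic analog, forcing the argument to be probabilistic. The remaining work is careful bookkeeping: making the repeated Gaussian integration‑by‑parts against non‑smooth orthant indicators rigorous, and verifying the cancellation of the several powers of $T-t+1$; the $h\to0$ passage is routine once \Cref{lem:cover} and the modulus‑of‑continuity hypothesis are available.
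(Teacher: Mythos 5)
Your proposal is correct and follows essentially the same route as the paper's proof: pass to a finite cover, express the Hessian via Gaussian integration by parts, condition jointly on the argmax index and the value of the maximum so that the remaining coordinates form a truncated multivariate normal, surface the regression residuals $y(x_i)-\tfrac{k_{ij}}{k_{jj}}y(x_j)$ through the Schur-complement/interpolation structure of $K_h^{-1}y_h$, and apply the hypothesis together with nonnegativity of the boundary densities. The only (cosmetic) divergence is in how the closing identity converting $\sum_{i\neq j}(1-k_{ij}/k_{jj})\,r_i$ into $\E\sbr{\1_{i^*=j}\,\gamma_j}/k_{jj}$ is obtained --- you differentiate the orthant probability in the conditioned value $g$ and integrate by parts against the density of $G_j$, whereas the paper applies the symmetric Hessian to the constant vector $1$; both yield the same identity.
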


This bound simplifies considerably in the case that $Y$ contains at most uniformly $\beta$-bounded functions.
In this situation, one can take $k$ to be of equal variance, and immediately obtain the following general bound on the regret of Thompson sampling under a Gaussian process prior, in the sense of \Cref{def:ts}, as a strategy for the online learning game.

\begin{restatable}{theorem}{ThmTS}
\label{thm:ts}
Let $X \subseteq \R^d$ be compact, and let $Y$ consists of bounded functions with $\norm{y}_\infty \leq \beta$.
Let $q^{(\gamma)} \~[GP](0,k)$ be IID over time, where $k$ satisfies the assumptions of \Cref{thm:bregman_hessian_bound}, and $k(x,x) = \sigma^2$ for all $x$.
Then Thompson sampling achieves a regret of
\[
R(p,q) \leq \sqrt{T} \del{1 + \frac{\beta(\beta + C_{Y,k})}{\sigma^2}} \E \sup_{x\in X} \gamma_1(x)
.
\]
\end{restatable}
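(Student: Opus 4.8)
The plan is to chain the four preceding results. Since for a fixed learner the adversary's worst response may be taken deterministic (and oblivious), it suffices to bound $R(p,y)$ uniformly over deterministic sequences $y$, so fix such a $y$ and apply the decomposition of \Cref{prop:excess_regret}, which needs no hypotheses:
\[
R(p,y) = R(p,q^{(\gamma)}) + E_{q^{(\gamma)}}(p,y).
\]
The prior-regret term is handled immediately: $q^{(\gamma)}=\f{GP}(0,k)$ is centered and i.i.d.\ across time, hence equalizing, so \Cref{lem:prior} yields $R(p,q^{(\gamma)})=\sqrt T\,\E\sup_x\gamma_1(x)$, which is the ``$1$'' in the stated prefactor. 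Everything then reduces to showing $E_{q^{(\gamma)}}(p,y)\le \sqrt T\,\tfrac{\beta(\beta+C_{Y,k})}{\sigma^2}\,\E\sup_x\gamma_1(x)$.

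For the excess term I would first check the hypotheses of the Bregman-divergence bound: the prior is equalizing and independent across time by construction, and continuity together with strict positive-definiteness of $k$ force the Thompson-sampling maximizer to be almost surely unique, so ties have probability zero; this gives $E_{q^{(\gamma)}}(p,y)\le\sum_{t=1}^T D_{\Gamma^*_t}(y_{1:t}\from y_{1:t-1})$. I would then apply \Cref{thm:bregman_hessian_bound} to each summand — its hypotheses on $k$ and the compatibility inequality with constant $C_{Y,k}$ are exactly what is assumed — and simplify using the two features special to this setting: the equal-variance normalization $k(x,x)=\sigma^2$ makes the denominator in the Hessian bound constant, and $\norm{y}_\infty\le\beta$ bounds the prefactor $y(\xi_{t,\theta})^2+C_{Y,k}|y(\xi_{t,\theta})|$ by $\beta(\beta+C_{Y,k})$ pointwise. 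What is left is a per-round estimate of the form $D_{\Gamma^*_t}(y_{1:t}\from y_{1:t-1})\le\tfrac{\beta(\beta+C_{Y,k})}{2\sigma^2\sqrt{T-t+1}}\int_0^1\E\,\bigl(\gamma_{t:T}(\xi_{t,\theta})\bigr)^{+}\,\d\theta$, up to a harmless positive-part truncation handling the sign of $\gamma_{t:T}(\xi_{t,\theta})$.

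The last step — summing these over $t$ so that the time horizon enters as $\sqrt T$ and not $T$ — is where I expect the real work to be, and is the main obstacle. The difficulty is that $\gamma_{t:T}$ is a sum of $T-t+1$ i.i.d.\ copies of $\gamma_1$, so $\E\sup_x\gamma_{t:T}(x)=\sqrt{T-t+1}\,\E\sup_x\gamma_1(x)$; bounding $\gamma_{t:T}$ at its maximizer bluntly by this supremum would cancel the $\tfrac1{\sqrt{T-t+1}}$ prefactor entirely, leave a per-round contribution constant in $t$, and hence produce a $T$-rate after summation. To get the correct rate one must control the one-sided moment $\E\,\bigl(\gamma_{t:T}(\xi_{t,\theta})\bigr)^{+}$ by a quantity of order $\E\sup_x\gamma_1(x)$ — rather than $\sqrt{T-t+1}$ times it — using that $\xi_{t,\theta}$ is not an arbitrary point but the maximizer of a function that itself contains $\gamma_{t:T}$. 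One then sums the resulting $\tfrac1{\sqrt{T-t+1}}$-weighted terms against $\sum_{t=1}^T(T-t+1)^{-1/2}\le 2\sqrt T$ and adds $R(p,q^{(\gamma)})=\sqrt T\,\E\sup_x\gamma_1(x)$ to conclude. The $\tfrac1{\sqrt{T-t+1}}$ weighting in \Cref{thm:bregman_hessian_bound}, which encodes that Thompson sampling runs with an increasing schedule of effective learning rates, is exactly what makes this possible.
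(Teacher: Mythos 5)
Your proposal follows the paper's proof step for step: reduce to a deterministic sequence, decompose via \Cref{prop:excess_regret}, evaluate the prior regret as $\sqrt{T}\,\E\sup_{x}\gamma_1(x)$, pass to the Bregman divergences (ties are almost surely absent for a nondegenerate Gaussian process, as you note), and invoke \Cref{thm:bregman_hessian_bound} with $k(x,x)=\sigma^2$ and $\norm{y}_\infty\leq\beta$ to bound the prefactor by $\beta(\beta+C_{Y,k})$. Up to that point there is nothing to add.

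The final summation, where you stop, is where your reading and the paper's diverge, and the divergence is worth pinning down. You are right that if one takes the display in \Cref{thm:bregman_hessian_bound} at face value---the full future perturbation $\gamma_{t:T}$ in the numerator against a single factor of $(T-t+1)^{-1/2}$ in front---then the blunt bound $\E\,\gamma_{t:T}(\xi_{t,\theta})\leq\E\sup_x\gamma_{t:T}(x)=\sqrt{T-t+1}\,\E\sup_x\gamma_1(x)$ cancels the prefactor and yields a linear-in-$T$ total. But the resolution is a normalization, not the refined one-sided-moment argument you anticipate. Gaussian integration by parts for $\Gamma^*_{h,t}(f)=\E\max_{i}(f_i+G_i)$ with $G\sim N(0,(T-t+1)K_h)$ gives
\[
\nabla^2\Gamma^*_{h,t}(f)=\frac{1}{T-t+1}\,\E\big[\1_{i^*}G^{T}\big]K_h^{-1}=\frac{1}{\sqrt{T-t+1}}\,\E\big[\1_{i^*}\bar\gamma^{T}\big]K_h^{-1},
\qquad \bar\gamma = G/\sqrt{T-t+1}\overset{d}{=}\gamma_1,
\]
so the perturbation appearing linearly in the Hessian---and hence in the numerator of the Bregman bound---is the \emph{single-round} copy. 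The correct per-round estimate is therefore $\frac{\beta(\beta+C_{Y,k})}{2\sigma^2\sqrt{T-t+1}}\,\E\sup_x\gamma_1(x)$, obtained by exactly the blunt supremum bound you reject, applied to $\bar\gamma$ rather than to $\gamma_{t:T}$; this is what the paper's proof does (its $\E\sup_x\gamma(x)$ is the single-round supremum), and it is the reading consistent with the $2\sqrt{2T\log N}$ excess-regret constant in the discrete corollary. Summing $\sum_{t=1}^{T}(T-t+1)^{-1/2}\leq 2\sqrt{T}$ then finishes. Note also that the patch you propose is a dead end: the inequality $\E\,\gamma_{t:T}(\xi_{t,\theta})\lesssim\E\sup_x\gamma_1(x)$ is false in general, since for $y$ vanishingly small $\xi_{t,\theta}$ approaches the argmax of $\gamma_{t:T}$ itself and the left side approaches $\sqrt{T-t+1}\,\E\sup_x\gamma_1(x)$. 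The factor-of-$\sqrt{T-t+1}$ tension you detected is real, but it is a bookkeeping issue in how the Hessian identity is displayed, not a gap requiring any argument about the structure of the maximizer.
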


To conclude, we note that the same argument---in spite of its Bayesian motivation---also applies to follow-the-perturbed leader with a constant learning rate, with an improved constant by a factor of $\sqrt{2}$.
The Bayesian viewpoint developed here, by leading to an ansatz for selecting good perturbation distributions, therefore also allows one to obtain bounds for more-classical convex-analytic algorithms akin to those of \textcite{kalai2005efficient,abernethy14,abernethy2016perturbation}.
We proceed to apply this bound to several spaces of interest.

\section{Examples}

In the preceding section, we established a general framework for reasoning about the minimax regret of Thompson sampling, with a prior given by a general virtual adversary $q^{(\gamma)}$.
We now use this framework to analyze online learning in two settings: (i) the classical finite-expert setting, in order to verify that our analysis of Thompson sampling obtains the correct rates, and (ii) a setting with uncountably many experts defined on a given interval, with an adversary restricted to bounded Lipschitz functions.
For the latter setting, to our knowledge, all known FTPL algorithms use covers as part of their computation, leading to exponential-in-dimension computational costs and extra $\sqrt{\log T}$ factors in the regret.

\subsection{Finitely Many Experts}
\label{sec:finite-dimensions}

We now apply our theory to the classical finite-expert setting, in order to verify that it produces a result which at least matches established theory.
Let $X = [N]$ and $Y = [-1,1]^N$.
By standard lower-bound theory \cite{cesabianchi2006prediction}, the equalizing adversary which at each time $t$ plays independent Rademacher random variables in each coordinate leads to a rate of $\Omega(\sqrt{T\log N})$. Since known algorithms---such as exponential weights and various FTPL variants---achieve this rate, it is the minimax rate, and therefore fits our criterion for a strong adversary.

By approximating the IID Rademacher adversary by its asymptotic Gaussian limit, we obtain a strong virtual adversary for which the bounds of \Cref{sec:gaussian_adversary} apply.
To state these, we first describe the prior regret term.
Since $\frac{1}{T} \sum_{t=1}^T \gamma_t \~[N](0, \sigma^2 I_N)$, using the standard maximal inequality for equal-variance Gaussians, we obtain
\[
R(\., q^{(\gamma)}) = \E \max_{x\in X}\sum_{t=1}^T \gamma_t(x) \leq \sigma\sqrt{2 T\log N}.
\]
which is sharp up to small constants.
Next, we focus on the excess regret. 
In this setting, we have $k(x,x') = 0$ for $x \neq x'$, and therefore the required inequality reduces to 
\[
\sup_{\norm{y}_\infty\leq 1} y(x) \leq 1 = C_{Y,k}
\]
which holds trivially.
Our theory therefore yields the following bound.

\begin{restatable}{corollary}{CorDiscreteRegret}
Suppose $X = [N]$, $Y = \{y\in \ell^\infty : \norm{y}_\infty \leq 1\}$, and $k$ is represented by the diagonal matrix $\sigma^2 I$ with $\sigma = \sqrt{2}$.
Then Thompson sampling achieves a minimax regret of
\[
R(p,q) \leq 4\sqrt{T\log N}
.
\]
\end{restatable}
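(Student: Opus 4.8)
The plan is to read this off directly from \Cref{thm:ts}. First I would check that its hypotheses hold in this setting. Since $X = [N]$ is a finite discrete space, any kernel on it is continuous and trivially gives rise to a global expected modulus of continuity — the virtual adversary $\gamma_1$ is simply an $N$-dimensional Gaussian vector — the diagonal matrix $\sigma^2 I$ with $\sigma = \sqrt{2}$ is strictly positive definite, and the equal-variance condition $k(x,x) = \sigma^2 = 2$ holds by construction. It then remains to exhibit an admissible constant $C_{Y,k}$ in the inequality of \Cref{thm:bregman_hessian_bound}. Because $k(x,x') = 0$ for $x \neq x'$, that inequality degenerates: for $x \neq x'$ it reads $\sup_{\norm{y}_\infty \leq 1} y(x) \leq C_{Y,k}$, i.e. $1 \leq C_{Y,k}$, while for $x = x'$ both sides vanish. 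Hence $C_{Y,k} = 1$ is admissible, matching the computation already recorded in the text.

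Next I would substitute $\beta = 1$, $C_{Y,k} = 1$, and $\sigma^2 = 2$ into the bound of \Cref{thm:ts}. The prefactor becomes
\[
1 + \frac{\beta(\beta + C_{Y,k})}{\sigma^2} = 1 + \frac{1 \cdot (1 + 1)}{2} = 2,
\]
so \Cref{thm:ts} yields $R(p,q) \leq 2\sqrt{T}\, \E\max_{x\in[N]}\gamma_1(x)$, where $\gamma_1$ is a mean-zero Gaussian vector with covariance matrix $2 I_N$. Finally, I would invoke the standard maximal inequality for a finite collection of mean-zero Gaussians of common variance $\sigma^2$, namely $\E\max_{x\in[N]}\gamma_1(x) \leq \sigma\sqrt{2\log N}$ — exactly the bound used a moment earlier in this subsection for the prior-regret term. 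With $\sigma = \sqrt{2}$ this gives $\E\max_{x\in[N]}\gamma_1(x) \leq 2\sqrt{\log N}$, whence
\[
R(p,q) \leq 2\sqrt{T} \cdot 2\sqrt{\log N} = 4\sqrt{T\log N},
\]
as claimed.

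There is essentially no obstacle here: the argument reduces to verifying that the diagonal kernel satisfies the (in the finite case, trivial) regularity hypotheses, identifying $C_{Y,k} = 1$, and applying the Gaussian maximal inequality once. The only mildly interesting point is the role of the normalization $\sigma = \sqrt{2}$: writing the combined bound as $\sqrt{2T\log N}\,(\sigma + 2/\sigma)$, one sees that $\sigma = \sqrt{2}$ minimizes $\sigma + 2/\sigma$, and it is precisely this choice that makes the leading constant come out to exactly $4$; any other positive variance would still give an $\c{O}(\sqrt{T\log N})$ bound, only with a larger numerical constant.
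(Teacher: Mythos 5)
Your proposal is correct and follows essentially the same route as the paper: verify that the diagonal kernel makes the key inequality of \Cref{thm:bregman_hessian_bound} degenerate to $C_{Y,k}=1$, plug $\beta=1$, $C_{Y,k}=1$, $\sigma^2=2$ into \Cref{thm:ts} together with the Gaussian maximal inequality $\E\max_{x\in[N]}\gamma_1(x)\leq\sigma\sqrt{2\log N}$, and observe that $\sigma=\sqrt{2}$ optimizes the resulting constant. Your extra remarks on why the regularity hypotheses are trivially satisfied in the finite case, and on the optimality of $\sigma=\sqrt{2}$ for the prefactor $\sigma+2/\sigma$, are consistent with (and slightly more explicit than) the paper's own two-line argument.
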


Thompson sampling therefore achieves the desired minimax rate, even if there are more than three experts and we are working in the finite-horizon setting.
This shows that, even though the game-theoretic analysis of \textcite{gravin2016towards} may at first seem very limited---applying only to $X = \{1,2,3\}$, to the infinite-horizon discounted setting, and to exact Nash equilibria---the key idea of using a perturbation sequence derived from an application of Bayes' Rule continues to work with more experts, up to approximately-optimal priors and approximate equilibria arising from a weaker solution concept given by minimax rates.

From a technical standpoint, a key difference between our framework and classical results of \textcite{abernethy2016perturbation} is that we apply Young's inequality to obtain the Bregman divergence, whereas they obtain the same term by other means.
This seemingly-minor difference leads \textcite{abernethy2016perturbation} to impose non-increasing learning rate restrictions, which preclude our Thompson-sampling-based approach from being covered by their framework.

If we abandon the Bayesian interpretation, switch to working with follow-the-perturbed leader with a constant learning rate, and modify our framework appropriately, applying our results instead yields an inequality of $R(p,q) \leq 2\sqrt{T\log N}$ which matches the bounds of \textcite{abernethy2016perturbation}, including constants.
This universal constant gap vanishes if one instead considers the infinite-horizon discounted setting, where Thompson sampling and FTPL with optimal tuning coincide.

\subsection{Experts in \texorpdfstring{$[0,1]^d$}{[0,1]d} with a Lipschitz Adversary}
\label{sec:bounded-Lipschitz}

We now turn our attention to a prototype setting with uncountably many experts.
Our aim is to develop a proof-of-concept analysis which uses the Bayesian approach to analyzing minimax regret developed in \Cref{sec:excess-regret} to obtain a non-vacuous guarantee, as a first step towards a comprehensive understanding of online learning in general action spaces.
For this, we consider the space $X = [0,1]^d$ with an adversary playing reward functions that lie in the subset $Y = \{y\in\f{BL}(X;\R) : \norm{y}_\infty \leq \beta, \abs{y}_{\f{Lip}} \leq \lambda\}$ of the Banach space of bounded Lipschitz functions, where the adversary's allowed functions are $\beta$-bounded and $\lambda$-Lipschitz.

We work with a virtual adversary given by a Gaussian process $q^{(\gamma)}_t = \f{GP}(0,k)$  with kernel $k$.
With access to an optimization oracle, one could implement this form of Thompson sampling in practice---indeed, the resulting algorithm would be similar to well-developed variants of Bayesian optimization \cite{frazier18,garnett23}, where how to perform Thompson sampling numerically is well-understood \cite{wilson20,wilson21}.

What regret guarantees might Thompson sampling have in this setting? 
The Bayesian viewpoint suggests we should choose our prior to be a Gaussian process whose kernel is obtained by asymptotically approximating a strong adversary with a Gaussian process.
If the adversary plays bounded Lipschitz functions, a strong virtual adversary can be obtained by considering zig-zag-shaped functions where the height of each individual spike is Rademacher-distributed.
In the same way a Brownian motion arises as a scaling limit of random walks with discrete steps of $\pm 1$, which correspond to our Lipschitz constraint up to scaling---and due to the connection between Brownian motion and the Ornstein--Uhlenbeck process, which includes a suitable drift term to ensure it is shift-invariant---one can expect the covariance arising from a scaling limit of the aforementioned virtual adversary to be Matérn-1/2.
For this choice, with variance $\sigma^2$ and length scale $\kappa$, the covariance kernel is $k(x_1,x_2) = \sigma^2 \exp\del{-\frac{\abs{x_1-x_2}}{\kappa}}$. 
By \Cref{lem:matern_sup} of \Cref{apdx:technical}, for this family of kernels on $X=[0,1]^d$, our prior regret is
\[
R(p,q) \leq 16 \sigma \sqrt{Td\log\del{1+\frac{\sqrt{d}}{\kappa}}}
.
\]
In the aforementioned random-walk scaling limit, the asymptotic variance is $\Theta(\beta^2)$ and the asymptotic length scale is $\Theta(\lambda^{-1})$, suggesting a lower bound on the minimax regret of $\Omega\del{\beta \sqrt{Td\log(1+\sqrt{d}\frac{\lambda}{\beta})}}$.
In the interest of brevity, we omit a rigorous derivation of this lower bound, and instead view it a reasonable motivation for our choice of prior.

We now apply the theory of \Cref{sec:gaussian_adversary}.
The key inequality for \Cref{thm:bregman_hessian_bound} is 
\[
\sup_{y\in Y} y(x) - y(x') \frac{k(x,x')}{k(x',x')} \leq \ubr{\frac{\beta\del{\lambda + \frac{1}{\kappa}}}{\frac{1}{\kappa} \del{1 - e^{-\frac{2}{\lambda\kappa + 1}}}}}{C_{Y,k}} \del{1 - \frac{k(x,x')}{k(x',x')}}
\]
which follows by bounding the left-hand-side with a worst-case Lipschitz estimate, and then bounding the resulting estimate by the kernel, where the constant arises because the function class depends on $\beta$ and $\lambda$, while the kernel depends on $\sigma$ and $\kappa$.
We obtain~the~following.

\begin{restatable}{corollary}{CorBLRegret}
Suppose that $X = [0,1]^d$, $Y = \{y\in\f{BL}(X;\R) : \norm{y}_\infty \leq \beta, \abs{y}_{\f{Lip}} \leq \lambda\}$, and $k$ is Matérn-1/2 with variance $\sigma^2$ and length scale $\kappa$.
Then Thompson sampling with parameters $\sigma^2 = \beta$ and $\kappa = \frac{1}{\lambda}$ achieves a regret of 
\[
R(p,q) \leq \beta \del{32 + \frac{32}{1 - \frac{1}{e}}}\sqrt{T d\log\del{1 + \sqrt{d}\frac{\lambda}{\beta}}}
.
\]
\end{restatable}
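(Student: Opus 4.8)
This corollary is a specialization of \Cref{thm:ts} to $X = [0,1]^d$, $Y$ the $\beta$-bounded $\lambda$-Lipschitz ball, and $k$ the Matérn-$1/2$ kernel, so the plan is to check that the hypotheses of \Cref{thm:ts} hold for these choices, compute the two quantities appearing in its conclusion, and collect constants. It is cleanest to first rescale rewards by $1/\beta$, reducing to the case $\beta = 1$, under which the Lipschitz constant becomes $\lambda/\beta$ and the regret scales by $\beta$; this is what produces the ratio $\lambda/\beta$ inside the logarithm, so I would run the argument with $\beta = 1$, $\sigma^2 = 1$, $\kappa = 1/\lambda$ and rescale back at the end. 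First I would verify the hypotheses of \Cref{thm:ts}: $X \subseteq \R^d$ is compact; every $y \in Y$ has $\norm{y}_\infty \le \beta$; $q^{(\gamma)} = \f{GP}(0,k)$ is IID over time; and $k(x,x) = \sigma^2$ is constant. It then remains to check the hypotheses of \Cref{thm:bregman_hessian_bound} inherited by \Cref{thm:ts}: the Matérn-$1/2$ kernel is continuous and strictly positive definite, and gives rise to a global expected modulus of continuity on the cube --- the last point following from a Dudley-type chaining bound applied to the metric entropy of $[0,1]^d$ under the canonical pseudometric $d_k(x,x')^2 = 2\sigma^2\del{1 - e^{-\abs{x-x'}/\kappa}}$, which is of order $\abs{x-x'}$ near the diagonal.

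The step with real content is establishing the constant $C_{Y,k}$, i.e.\ the inequality
\[
\sup_{y\in Y} y(x) - y(x')\frac{k(x,x')}{k(x',x')} \le \frac{\beta\del{\lambda + \frac{1}{\kappa}}}{\frac{1}{\kappa}\del{1 - e^{-\frac{2}{\lambda\kappa + 1}}}}\del{1 - \frac{k(x,x')}{k(x',x')}}
\]
for all $x, x' \in X$. Since $k$ has constant variance, $k(x,x')/k(x',x') = e^{-\abs{x-x'}/\kappa} =: r \in (0,1]$, so the task is to bound $\sup_{y\in Y}\del{y(x) - y(x')r}$ by a multiple of $1 - r$ uniformly over the cube. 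The plan is to pass to a worst-case Lipschitz estimate --- writing $y(x) - y(x')r = \del{y(x) - y(x')} + y(x')\del{1-r}$ and using $\abs{y(x) - y(x')} \le \lambda\abs{x-x'}$ together with $\abs{y(x')} \le \beta$, sharpened in the far-apart regime by the extreme-point bound $\beta(1+r)$ --- and then to compare against $1 - r = 1 - e^{-\abs{x-x'}/\kappa}$ using convexity of $t \mapsto e^{-t}$; optimizing the resulting one-variable estimate over the separation $\abs{x-x'}$, with the length scale $\kappa = 1/\lambda$ tuned precisely so that the critical separation makes $\abs{x-x'}/\kappa$ an absolute constant, yields the stated $C_{Y,k}$, which for $\beta = 1$, $\kappa = 1/\lambda$ collapses to $\frac{2}{1 - 1/e}$. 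I expect this to be the main obstacle: the decomposition of $y(x) - y(x')r$ must be chosen carefully, since a crude split would leak a spurious factor of the diameter $\sqrt{d}$ into the constant rather than the harmless logarithmic dependence it produces through the prior-regret term.

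For the remaining quantities: by \Cref{lem:prior}, $R(\cdot, q^{(\gamma)}) = \sqrt{T}\,\E\sup_{x\in X}\gamma_1(x)$, and by \Cref{lem:matern_sup} this is at most $16\sigma\sqrt{Td\log\del{1 + \sqrt{d}/\kappa}}$. Feeding this prior-regret bound into the conclusion of \Cref{thm:ts}, namely $R(p,q) \le \del{1 + \beta(\beta + C_{Y,k})/\sigma^2}\cdot 16\sigma\sqrt{Td\log\del{1 + \sqrt{d}/\kappa}}$, and substituting $\sigma^2 = \beta = 1$, $\kappa = 1/\lambda$ gives $\del{2 + \frac{2}{1-1/e}}\cdot 16\,\sqrt{Td\log\del{1 + \sqrt{d}\lambda}}$; undoing the rescaling to general $\beta$ multiplies this by $\beta$ and replaces $\lambda$ by $\lambda/\beta$, producing the claimed bound $\beta\del{32 + \frac{32}{1-1/e}}\sqrt{Td\log\del{1 + \sqrt{d}\,\lambda/\beta}}$. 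Every step here except the verification of $C_{Y,k}$ is substitution into results established earlier in the paper.
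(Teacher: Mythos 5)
Your proposal is correct and follows essentially the same route as the paper: specialize \Cref{thm:ts} with the Matérn-1/2 prior, bound the prior regret via \Cref{lem:matern_sup}, and establish $C_{Y,k}$ by reducing to $\min\del{2\beta,\del{\lambda+\frac{\beta}{\kappa}}\abs{x-x'}}\leq C_{Y,k}\del{1-e^{-\abs{x-x'}/\kappa}}$ with the ratio maximized at the kink, which your tuning of $\kappa$ turns into the absolute constant $\frac{2\beta}{1-1/e}$. The only differences are cosmetic---your rescaling to $\beta=1$ and your direct split $y(x)-y(x')r=(y(x)-y(x'))+y(x')(1-r)$ (capped by $\beta(1+r)$ in the far regime, exactly the step needed to avoid the diameter leaking into the constant) versus the paper's equivalent argument via the image of $Y$ under $y\mapsto y-y(x')\frac{k(x',\cdot)}{k(x',x')}$; both yield the same intermediate bound and the same final constants, consistent with the paper's own proof, which uses $\sigma=\beta$ and $\kappa=\beta/\lambda$ rather than the parameters as literally printed in the corollary statement.
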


Thompson sampling therefore achieves what we expect to be the minimax rate, even in this setting, where there are infinitely-many experts.
This shows that the key algorithmic approach motivated by the three-expert Nash equilibrium derived by \textcite{gravin2016towards}---namely, that the learner should play an FTPL strategy, where the perturbation sequence is derived by applying Bayes' Rule over the space of adversary's actions---continues to work, even in the presence of smoothness.
We can intuitively sanity-check our obtained rate by embedding the finite expert setting with $N$ experts into $(\beta,\lambda)=(1,N)$: the bound would then achieve the minimax rate for the embedded problem.

\section{Discussion} 

In this paper, we developed a framework for deriving minimax regret bounds for Thompson sampling algorithms akin to the Nash equilibrium strategy of \textcite{gravin2016towards} for the three-expert setting, but which allow a general, potentially-uncountable number of experts.
The framework decomposes regret into the \emph{prior regret} the learner expects to incur, and an \emph{excess regret} term describing how much additional regret can be incurred due to the true adversary not matching the~prior.

Using this decomposition, we introduced an ansatz for constructing priors that one can expect lead to good minimax performance, by demonstrating that equalizing adversaries which lead to meaningful regret lower bounds are good candidates for Thompson sampling priors---thereby, deriving a general proof strategy for transforming regret lower bounds into regret upper bounds.

We carried out this this routine using an asymptotic approximation of the virtual adversary, where the prior is chosen to be Gaussian, thereby (i) obtaining minimax rates in the classical discrete setting, and (ii) obtaining a regret bound one can reasonably expect to be minimax in the setting where the adversary is constrained to bounded Lipschitz functions on the uncountable expert set~$X = [0,1]^d$.

Our work connects the game-theoretic perspective of \textcite{gravin2016towards} with the analysis techniques of \textcite{abernethy14,abernethy2016perturbation,orabona2019modern}---and, in the process, removes learning rate schedule restrictions from these approaches, which would prohibit the analysis of Thompson sampling and related algorithms which correspond to FTPL with increasing learning~rates. We also extend the techniques for bounding the Hessian's appearing in the FTPL analysis beyond the case of independent, equal variance perturbations, to allow Gaussian processes with general covariance kernels.

While our argument is presented for $\f{BL}(X;\R)$, we believe it should also be possible to derive bounds for Sobolev spaces $W^{s,\infty}(X;\R)$, for any $s\in\N$, via a modified argument which involves also conditioning on derivatives.
We expect the argument to be less clean, and result in bounds that include $\E \sup_{x\in X} \norm{\grad \gamma}$ terms. 
We omit this extension for simplicity. 

We believe our work constitutes a first step towards technically-sound, yet implementable in practice, online learning algorithms for general state spaces---with numerics that resemble Thompson sampling implementations widely-used in Bayesian optimization.
We hope that future work can apply our approach in more general settings, beyond the bounded Lipschitz setting and full feedback regime, and ultimately lead to practical and effective algorithms for general non-discrete partially-observable problem classes.
To achieve this, what remains is to understand (a) what is the correct general analog of our key inequalities between the adversary's function class and the learner's prior, (b) using this, relate various adversary smoothness classes to corresponding Gaussian processes, or generalizations~thereof, and (c) determine relevant reductions from the partial feedback to the full feedback regime.

\section*{Acknowledgments}

This work is dedicated to the memory of David Draper, who supervised A.T.'s masters research, and believed in the value of the Bayesian approach to reasoning under uncertainty in order to make optimal decisions---including, and perhaps especially, in situations where the role of Bayesian thinking is subtle rather than obvious.
    
We thank James-Michael Leahy, Gergely Neu, Gabriele Farina, and Ziv Scully for sharing their thoughts about our approach at various points during the development of this work. 
A.T. would additionally like to thank Karthik Sridharan for making him aware of the work of \textcite{gravin2016towards}, which initiated the research which ultimately resulted in this work, and  Dan Roy, for introducing him to J.N. at ISBA 2024---a fitting venue, given the work's dedication---without which the collaboration would not have happened. 

A.T. was supported by Cornell University, jointly via the Center for Data Science for Enterprise and Society, the College of Engineering, and the Ann S. Bowers College of Computing and Information Science.
J.N. was supported by an NSERC Discovery Grant. 

\printbibliography

\newpage

\appendix

\section{General Regret Bounds for Thompson Sampling}
\label{apdx:technical}

In order to properly analyze prediction with expert advice with potentially-uncountably-many experts, we make sufficient assumptions for the respective zero-sum game to have well defined values, and ensure good-enough behavior with respect to differentiation in the sense of variational calculus. 
The following defines a sufficiently-general framework in which to define the action spaces of the learner and adversary, and to establish that the values of the game are well-defined.
Throughout this work, all topological vector spaces we consider are assumed Hausdorff.

\begin{definition}[General-state online learning game]
Let $X$ be second-countable compact Hausdorff.
Let $Y\subset\c{Y}\subseteq C(X;\R)$.
Let $\c{M}_s(X)$ be the space of signed finite measures on $X$, and let $\c{M}_1(X) \subseteq \c{M}_s(X)$ be the corresponding subset of probability measures on $X$.
Let $\pair{\.}{\.} : C(X;\R) \x \c{M}_s(X) \to \R$ be the pairing $\pair{\mu}{f} = \int_X f(x) \d\mu(x)$.
Endow $\c{M}_s(X)$ and $\c{M}_1(X)$ with the topology induced by pairing with continuous functions, which by compactness are automatically bounded: this is the topology of weak convergence of probability measures.
Define the \emph{general-state online learning game} to be following minimax game:
\1 At round $t$ the learner selects $p_t \in \c{M}_1(X)$, based on the history of the game.
\2 Then, the adversary selects $q_t \in \c{M}_1(Y)$, based on the history of the game and the learner's choice of $p_t$.
\3 Random samples $x_t,y_t$ are drawn from $p_t$, $q_t$, respectively in a manner so that $x_t$ and $y_t$ are conditionally independent of each other given the history.
\4 The zero-sum final values are given by regret, which is 
\[
R(p,q) &= \E_{\substack{x_t\~p_t\\y_t\~q_t}} \sup_{x\in X} \sum_{t=1}^T y_t(x) - \sum_{t=1}^T y_t(x_t)
\\
&= \E_{y_t\~q_t} \sup_{p\in\c{M}_1(X)} \sum_{t=1}^T \pair{y_t}{p} - \sum_{t=1}^T \pair{y_t}{p_t}
.
\]
\0 
\end{definition}

As a side note, these assumptions suffice to ensure that Sion's Minimax Theorem holds, though we will not need this for our results.
In this game, for a learner that plays Thompson sampling with prior distribution $q^{(\gamma)}$, a core component of our analysis is demonstrating that the learner's regret can be decomposed into the \emph{prior regret}  and the \emph{excess regret}.
For this, we define $\Gamma^*_t : C(X;\R) \to \R$ to be $\Gamma^*_t(f) = \E \sup_{x\in X} (f(x) + \gamma_{t:T}(x))$. 
Following \textcite[Chapter 30.5]{lattimore20}, the corresponding FTRL regularizer $\Gamma_t$ can be defined as the convex conjugate of $\Gamma^*_t$, namely $\Gamma_t = (\Gamma^*_t)^*$ in which case $\Gamma_t^* = (\Gamma_t)^*$, as we show in the sequel, which ensures there is no ambiguity.
To facilitate comparisons with other approaches, we mirror the notation of \textcite{lattimore20}, even though $\Gamma^*_t$ is the primitive definition in our~formalism.

\DefTS*

Recall again that this can equivalently be written 
\[
x_t &= \argmax_{x\in X} \sum_{\tau=1}^{t-1} y_\tau(x) + \sum_{\tau=t}^T \gamma_\tau(x)
&
(\gamma_t,..,\gamma_T) &\~ q^{(\gamma)} \given \gamma_1 = y_1, .., \gamma_{t-1} = y_{t-1}
.
\]
We start with the key regret decomposition.

\PropRegretDecomposition*

\begin{proof}
Start with 
\[
R(p,y) = \E \sup_{x\in X} \sum_{t=1}^T y_t(x) - \sum_{t=1}^T y_t(x_t) = \Gamma^*_{T+1}(y_{1:T}) - \E\sum_{t=1}^T \pair{y_t}{p_t}
.
\]
Next, consider the telescopic sum
\[
\sum_{t=1}^T \Gamma^*_{t+1}(y_{1:t}) - \Gamma^*_t(y_{1:t-1}) &= \sum_{t=2}^{T+1} \Gamma^*_t(y_{1:t-1}) - \sum_{t=1}^T \Gamma^*_t(y_{1:t-1}) 
\\
&= \Gamma^*_{T+1}(y_{1:T}) - \Gamma^*_1(0)
.
\]
Adding the difference of the two sides gives
\[
R(p,y) &= \Gamma^*_1(0) + \sum_{t=1}^T \Gamma^*_{t+1}(y_{1:t}) - \Gamma^*_t(y_{1:t-1}) - \E \pair{y_t}{p_t}
.
\]
Now, we add and subtract $\E\pair[0]{\gamma_t}{p_t^{(\gamma)}}$ from both sides to get 
\[
R(p,y) &= \ubr{\E\sbr{\Gamma^*_1(0) - \sum_{t=1}^T \pair[0]{\gamma_t}{p_t^{(\gamma)}}}}{R(p,q^{(\gamma)})} + \ubr{\sum_{t=1}^T \Gamma^*_{t+1}(y_{1:t}) - \Gamma^*_t(y_{1:t-1}) - \pair{y_t}{p_t} + \E\pair[0]{\gamma_t}{p^{(\gamma)}_t}}{E_{q^{(\gamma)}}(p,y)}
.
\]
The claim follows.
\end{proof}

The above claim generalizes to the case where, instead of the best-in-hindsight point, one instead works with some general comparator measure---we omit this as we will not need it.
We begin by verifying the basic structural properties of our setup.

\DefEqualizing*

\LemCentering*

\begin{proof}
This follows immediately by subtracting the respective means at each time from a given non-centered equalizing adversary.
\end{proof}

Next, we state conditions under which the prior regret is a lower bound on total regret.
The key assumption here is that of being supported on at most $Y$. 
While this fails in both of our examples for the Gaussian process priors we consider, by Gaussian universality we expect the prior-regret-based lower bound for a true equalizing and centered adversary given by a prior supported on $Y$ to be of the same order as the corresponding surrogate for the Gaussian process priors we used.

\PropLowerBound*

\begin{proof}
We have $R(p,q^{(\gamma)}) \leq \sup_q R(p,q)$: the claim follows by taking an infimum of both sides, and applying the property that $p\mapsto R(p,q^{(\gamma)})$ is constant in $p$.
\end{proof}

We will want to be able to bound the summands appearing in the excess regret, as we have defined it, using an appropriate Bregman divergence. 
However, since we are working in potentially-infinite-dimensional spaces, some care is required in order to ensure that Bregman divergences are well-defined and the corresponding convex analytic-tools we seek to apply are valid.
The first thing needed is to make sense of the derivatives appearing in the divergence.
Denote the extended real-line, as commonly occurs in convex analysis, by $\cl\R$.

\begin{definition}
\label{def:gateaux}
Let $G : \c{G} \to \cl\R$ be a proper convex function.
Define its \emph{Gâteaux derivative}, denoted $\p_{(\.)} G(\.) : \f{dom}(G) \x \c{G} \to \cl\R$, where the second argument refers to the subscript, by
\[
\p_v G(f) = \lim_{\eps\to0^+} \frac{G(f+\eps v) - G(f)}{\eps}
.
\]
\end{definition}

It is not hard to show that the Gâteaux derivative of a proper convex function always exists \cite[Theorem 2.1.13]{zalinescu02}.
However, the resulting Gâteaux derivative may fail to be finite-valued, continuous, or linear in its second argument.
All of these issues are already present in the case where the convex set is a closed interval: for example, linearity fails if one considers Gâteaux derivatives of the absolute value function at the origin, due to lack of smoothness.

We will show in the sequel that, so long as there are almost surely no ties---as is the case for our Gaussian process priors---the function $\Gamma^*_t$ of interest defines a bounded linear Gâteaux derivative.
Compare this with the situation that would occur for exponential-weights-type algorithms in our setting: in contrast with $\Gamma^*_t$, the usual Kullback--Leibler divergence $p \mapsto D_{\f{KL}}(p\from p_0)$, for a given $p_0\in\c{M}_1(X)$, is not smooth---for any given location, the set of directions in which its Gâteaux derivative is finite-valued is a strict subset of the space of signed measures and therefore would need to be characterized.

To proceed, we need an appropriate \emph{Envelope Theorem}: note that the usual results of \textcite{milgrom2002envelope} do not suffice, because they only provide inequalities, so we work with a sharp version similar to \textcite[Proposition 4.12]{bonnans13} and \textcite[Theorem 10.1]{basar08}.
Specifically, we work with the affine variant given by \textcite[Appendix B.6, Lemma 13]{xie24}: for completeness---and due to the presence of minor typographical errors in that work---we re-prove the result we need in \Cref{apdx:envelope}.
Before proceeding, we verify a certain maximizer choice continuity condition holds: in the following, let $\alpha_v(f) \in \argmax_{x\in X} f(x)$ denote a maximizer whose choice depends a function $v$. 

\begin{definition}
We call a function $\alpha:\c{Y}\to X$ with the property that $\alpha(f)\in \argmax_{x\in X} f(x)$ a \emph{tie-breaking rule}.
For a given tie breaking rule $\alpha:\c{Y}\to X$, and pair of functions $f\in\c{Y}$ and $v\in C(X; \R)$, define the \emph{variational value} $V^{(\alpha)}_{f,v}:\R \to \R$ by
\[
V^{(\alpha)}_{f,v}(s) = v(\alpha(f+sv)). 
\]
\end{definition}

\begin{lemma}
\label{lem:envelope_cont}
For every $v\in C(X; \R)$ there exists a tie-breaking rule  $\alpha_v:\c{Y}\to X$ such that, for any $y\in\c{Y}$, the variational value $V^{(\alpha_v)}_{y,v}$ is right-continuous with left limits.
In particular, for that tie-breaking rule, we have $v(\alpha_v(y+\eps v)) \to v(\alpha_v(y))$ as $\eps\to 0$.
\end{lemma}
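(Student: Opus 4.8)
The plan is to construct, for each fixed $v \in C(X;\mathbb{R})$, a tie-breaking rule $\alpha_v$ by imposing a lexicographic-type secondary criterion that makes the selected maximizer vary in a controlled way as the objective is perturbed in the direction $v$. Concretely, among all maximizers of $f + sv$, I would first restrict to those that additionally maximize $v$ itself — i.e., pass to the set $\arg\max_{x \in \arg\max_{z}(f+sv)(z)} v(x)$ — and then break any remaining ties using a fixed measurable total preorder on $X$ (available since $X$ is second-countable compact Hausdorff, hence metrizable, so one can e.g. fix a countable dense sequence and select the maximizer closest to the earliest such point). The point of the intermediate $v$-maximization step is precisely that it pins down the \emph{value} $v(\alpha_v(f+sv))$ that appears in $V^{(\alpha_v)}_{f,v}(s)$, which is all the lemma requires: we need not control \emph{which} maximizer is chosen, only the real number $v$ evaluates to there.

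The key steps, in order: (1) Fix $v$ and $y \in \mathcal{Y}$, and write $M(s) = \arg\max_{x\in X}(y+sv)(x)$, which is nonempty and compact by compactness of $X$ and continuity of $y + sv$. (2) Show the set-valued map $s \mapsto M(s)$ is upper hemicontinuous with compact values — this is the standard Berge-type maximum theorem argument: if $s_n \to s$ and $x_n \in M(s_n)$ with $x_n \to x$, then passing to the limit in $(y+s_nv)(x_n) \geq (y+s_nv)(z)$ for all $z$ gives $x \in M(s)$. (3) Define $\bar v(s) = \max_{x \in M(s)} v(x)$, the value achieved by $\alpha_v(y+sv)$ under the construction above; by upper hemicontinuity of $M$ together with continuity of $v$, $\bar v$ is upper semicontinuous, and a symmetric argument with a lower-hemicontinuity-type selection shows the one-sided limits exist. (4) The genuinely delicate point is monotonicity-in-disguise: I would argue that $s \mapsto \bar v(s)$ is non-decreasing along the direction of perturbation in the appropriate sense — intuitively, increasing $s$ rewards points where $v$ is large, so the selected $v$-value can only go up — which forces $\bar v$ to have at most countably many discontinuities, all of jump type, and to be right-continuous after possibly further refining the tie-break to select the $v$-value "from the right". (5) Conclude that $V^{(\alpha_v)}_{y,v} = \bar v$ is right-continuous with left limits, and in particular $v(\alpha_v(y+\varepsilon v)) \to v(\alpha_v(y))$ as $\varepsilon \to 0^+$; the two-sided statement as $\varepsilon \to 0$ follows by the analogous left-limit analysis, noting the limit value is exactly $\bar v(0) = v(\alpha_v(y))$ by right-continuity at $0$.

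The main obstacle I expect is step (4) — establishing the right-continuity (rather than merely the existence of one-sided limits) of $s \mapsto v(\alpha_v(y+sv))$. Upper hemicontinuity of $M(\cdot)$ alone only gives upper semicontinuity of $\bar v$, so one could a priori have a jump \emph{down} as $s$ increases through a point, which is the wrong direction for right-continuity. The fix is to exploit that $M(s)$ changes monotonically in a precise sense: for $s' > s$, any $x' \in M(s')$ and $x \in M(s)$ satisfy $v(x') \geq v(x)$ — this is the classical "argmax of $f + sv$ is monotone in $s$ with respect to the order induced by $v$" observation, proved by adding the two optimality inequalities $(y+sv)(x) \geq (y+sv)(x')$ and $(y+s'v)(x') \geq (y+s'v)(x)$ and subtracting. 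This monotonicity, combined with upper hemicontinuity, is exactly what upgrades "one-sided limits exist" to "right-continuous with left limits" and rules out the bad jump direction; the remaining work is the bookkeeping to confirm the constructed $\alpha_v$ is genuinely a tie-breaking rule (i.e., always lands in $\arg\max_x(f+sv)(x)$, which holds by construction) and is well-defined as a function.
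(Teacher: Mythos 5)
Your proposal is correct in substance and rests on the same central observation as the paper's proof: the monotone-comparative-statics inequality showing that for $s<s'$, any maximizers $x\in\argmax(y+sv)$ and $x'\in\argmax(y+s'v)$ satisfy $v(x)\leq v(x')$, so that every variational value is non-decreasing in $s$ and hence has left and right limits everywhere. Where you differ is in how the tie-breaking rule is built. The paper starts from an \emph{arbitrary} rule, observes that all rules agree at the (co-countably many) continuity points of the monotone variational value, and then repairs the rule only at discontinuity points by taking a sequential-compactness limit of maximizers from the right; this forces it to also argue that the repaired rules can be glued consistently across the affine subspaces $\{y+sv\}$. Your lexicographic rule --- select, within $\argmax_x(y+sv)(x)$, a point maximizing $v$ --- is defined pointwise on functions $f\in\c{Y}$ and depends only on $(f,v)$, so the gluing issue disappears, and right-continuity of $\bar v(s)=\max_{x\in M(s)}v(x)$ follows cleanly by combining monotonicity ($\bar v(s)\leq\min_{M(s')}v$ for $s'>s$, giving the right limit $\geq\bar v(s)$) with the Berge upper-hemicontinuity argument (giving the right limit $\leq\bar v(s)$). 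This is arguably a cleaner construction. Two small corrections to your write-up: in step (3), lower hemicontinuity of the argmax correspondence does not hold in general and is not needed --- existence of one-sided limits already follows from the monotonicity you establish in step (4), so the argument should be reordered; and your closing claim that the \emph{two-sided} convergence as $\eps\to0$ follows from right-continuity at $0$ is not justified, since a monotone function can jump from the left at $0$. The latter, however, mirrors a looseness in the lemma statement itself: only the one-sided limit $\eps\to0^+$ is established by the paper's proof, and only that is used downstream, since the G\^ateaux derivative in \Cref{def:gateaux} is a one-sided limit.
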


\begin{proof}
Recall that, since $X$ is second-countable compact Hausdorff, it is metrizable, and in particular it is sequentially compact. Let $\rho$ be a metric that metrizes $X$.
In addition, note that the pair $(y,v)$ then defines an affine subset of $C(X;\R)$ given by
\[
\c{Y}_{y,v} = \cbr{y_s = y+sv \ :\  s\in \R}
.
\]
Let $\alpha_{\bdot}$ be any arbitrary tie-breaking rule. 
We will first show that $V_{y,v}^{(\alpha_{\bdot})}(s)$ is non-decreasing in $s$.
To do so, suppose that $s_1< s_2$.
Then, by the respective optimality, we have
\[
(y+s_1 v)(\alpha_{\bdot}(y_{s_2})) & \leq (y+s_1 v)(\alpha_{\bdot}(y_{s_1}))
&
(y+s_2 v)(\alpha_{\bdot}(y_{s_1})) & \leq (y+s_2 v)(\alpha_{\bdot}(y_{s_2})).
\]
Combining these we find that
\[
(y+s_1 v)(\alpha_{\bdot}(y_{s_1})) + (s_2-s_1) v(\alpha_{\bdot}(y_{s_1})) & \leq (y+s_1 v)(\alpha_{\bdot}(y_{s_2}))+ (s_2-s_1) v(\alpha_{\bdot}(y_{s_2})) 
\\
& \leq (y+s_1 v)(\alpha_{\bdot}(y_{s_1}))+ (s_2-s_1) v(\alpha_{\bdot}(y_{s_2})).
\]
We conclude
\[
v(\alpha_{\bdot}(y_{s_1})) \leq v(\alpha_{\bdot}(y_{s_2})).
\]
therefore $V_{y,v}^{(\alpha_{\bdot})}(s) = v(\alpha_{\bdot}(y_s))$ is non-decreasing in $s$ no matter which tie-breaking rule is used.

Using this, note that $V_{y,v}^{(\alpha_{\bdot})}:\R \to\R$, being non-decreasing, has at most countably many discontinuity points. 
Suppose that $s_c$ is a continuity point of $V_{y,v}^{(\alpha_{\bdot})}$.
If we change the tie-breaking rule at this point alone, while keeping all other points the same, by continuity the variational value must remain unchanged.
One can argue similarly along a partition of continuity points into dense subsets: we conclude all tie-breaking rules give the same variational value at continuity points, and that the set of points where $V_{y,v}^{(\alpha_{\bdot})}$ is discontinuous does not depend on the choice of tie-breaking rule $\alpha$---though different choices may yield different values between left and right limits.

We now show that a tie-breaking rule, $\alpha_v:\c{Y}\to X$, can be chosen so that $V_{y,v}^{(\alpha_v)}(s) = v(\alpha_v(y+sv))$ is right-continuous in $s$---and, hence right continuous with left limits, since it is non-decreasing---for all $y\in \c{Y}$. 
We will take our rule to be equivalent to $\alpha_{\bdot}$ if $s$ is a continuity point of $V_{y,v}^{(\alpha_{\bdot})}(s)$. 
We now need only define the rule for discontinuity points of $V_{y,v}^{(\alpha_{\bdot})}$.

Let $s_d$ be a discontinuity point of $V_{y,v}^{(\alpha_{\bdot})}$.
Since $X$ is sequentially compact, every sequence in $X$ has a convergent subsequence: thus, there is a decreasing sequence of continuity points $s_k$, $k\in\N$, of $V_{y,v}^{(\alpha_{\bdot})}$ with $s_k\geq s_d$ and $s_k\to s_d$ for which $x_k=\alpha_{\bdot}(y_{s_k})$ converges in $X$. 
Let $x_\star$---depending on $y$, $v$, and the sequence $s_k$---be the limit of an arbitrarily chosen such converging sequence, and let $\eps>0$. 
Since $y_{s_d}$ must be continuous at $x_\star$, and since $v$ is bounded, there is a $\delta>0$ and an $M>0$ so that for $\rho(x,x_\star)<\delta$, we have
\[
\abs{y_{s_d}(x) - y_{s_d}(x_\star)}& <\eps 
&
\norm{v}_\infty &\leq M.
\]
Since $x_k\to x_\star$, there is a $k_0\in\N$ with $\rho(x_k,x_\star) \leq \delta$ for all $k\geq k_0$.
Thus, for $k\geq k_0$, we get
\[
y_{s_d}(x_\star) \geq y_{s_d}(x_k)-\eps \geq y_{s_d}(x_k)+(s_k-s_d)v(x_k) -\eps- (s_k-s_d) \norm{v}_\infty
\] 
and 
\[
y_{s_k}(x_k) -\eps- (s_k-s_d) \norm{v}_\infty & \geq y_{s_k}(\alpha_{\bdot}(y_{s_d})) -\eps- (s_k-s_d) \norm{v}_\infty
\\
&\geq y(\alpha_{\bdot}(y_{s_d})) -\eps- 2  (s_k-s_d) \norm{v}_\infty.
\]
Since $\eps$ is arbitrary, and for a given $\eps$ we can take $k$ to be arbitrarily large, and hence $s_k-s_d$ arbitrarily small, it follows that $x_\star \in A(y)$.
Define the tie breaking rule $\alpha_v$ at the point $y_{s_d}$ by $\alpha_v(y_{s_d}) = x_\star$. 
Since $v$ is continuous and $x_k\to x_\star$ it follows that the variational value $V_{y,v}^{(\alpha_v)}(s) = v(\alpha_v(y+sv))$ is right-continuous in $s$ at $s_d$. 

We can modify $\alpha_{\bdot}$ at all discontinuity points of $V_{y,v}^{(\alpha_{\bdot})}$ at once in a similar way, yielding a tie-breaking rule $\alpha_v$ such that $V_{y,v}^{(\alpha_v)}$ is right-continuous everywhere on the affine space $\c{Y}_{y,v}$, and these rules can be chosen to be consistent between different function pairs $(y,v)$ that give rise to the same affine subspace of $\c{Y}$. 
We can then therefore join these rules together for different affine subspaces of $\c{Y}_{y,v}\subseteq\c{Y}$ to get a single tie-breaking rule $\alpha_v$ with $V_{y,v}^{(\alpha_v)}$ right-continuous with left limits for all $y\in\c{Y}$, as was claimed.
\end{proof}

\begin{lemma}
\label{lem:derivative}
Assume that $q^{(\gamma)}$ is independent across time, and is chosen such that maximizers defining $\Gamma^*_t$ are almost surely unique in the sense that, for all $f\in C(X;\R)$ and all $t\in[T]$, we have $q^{(\gamma)} (\{\gamma : |\argmax_{x\in X}(f(x)+\gamma_{t:T}(x))\}=1|)=1$.
Then $\Gamma^*_t$ is Lipschitz with respect to the supremum norm on $C(X;\R)$, and its Gâteaux derivative is
\[
\p_v \Gamma^*_t(f) = \pair[1]{v}{p^{(f)}_t}
\]
where $p_t^{(f)}$ is defined as the distribution of $\displaystyle\argmax_{x\in X} (f(x) + \gamma_{t:T}(x))$.
\end{lemma}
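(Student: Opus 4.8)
The plan is to prove the two assertions separately. Lipschitz continuity is elementary: for bounded $a,b:X\to\R$ one has $\abs{\sup_x a(x) - \sup_x b(x)} \le \sup_x\abs{a(x)-b(x)}$, so applying this pointwise in $\gamma$ with $a = f+\gamma_{t:T}$, $b = g+\gamma_{t:T}$ and then taking expectations gives $\abs{\Gamma^*_t(f) - \Gamma^*_t(g)} \le \norm{f-g}_\infty$. (Finiteness of $\Gamma^*_t$, needed for this to be meaningful, follows from compactness of $X$ and integrability of $\sup_x\gamma_{t:T}(x)$ under the standing assumptions.) The same estimate with $g = f+\eps v$ shows that the difference quotients
\[
q_\eps(\gamma) = \frac{1}{\eps}\del{\sup_{x\in X}\del{f(x)+\eps v(x)+\gamma_{t:T}(x)} - \sup_{x\in X}\del{f(x)+\gamma_{t:T}(x)}}
\]
are uniformly bounded: $\abs{q_\eps(\gamma)} \le \norm{v}_\infty$ for all $\eps>0$ and all $\gamma$. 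This uniform bound is what will license the exchange of limit and expectation.

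For the derivative, I would fix a realization $\gamma$ and consider $\phi_\gamma(s) = \sup_{x\in X}\del{f(x)+sv(x)+\gamma_{t:T}(x)}$. As a pointwise supremum of functions that are affine in $s$, $\phi_\gamma$ is convex, so its right derivative at $0$ exists and $q_\eps(\gamma)$ decreases to $\phi_\gamma'(0^+)$ as $\eps\downarrow 0$. To identify $\phi_\gamma'(0^+)$ I would invoke the sharp Envelope Theorem underlying \Cref{lem:envelope_cont} (re-proved in \Cref{apdx:envelope}): for the tie-breaking rule $\alpha_v$ constructed there, $\phi_\gamma'(0^+) = v(\alpha_v(f+\gamma_{t:T}))$, and when $\argmax_{x\in X}\del{f(x)+\gamma_{t:T}(x)}$ is a singleton this equals $v(x^\star)$ for the unique maximizer $x^\star$, independent of the tie-breaking rule. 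By hypothesis the argmax is a singleton almost surely, so $q_\eps \to v(\alpha_v(f+\gamma_{t:T}))$ almost surely, and this limit is measurable since each $q_\eps$ is a continuous — hence measurable — function of $\gamma_{t:T}$. Consequently the law $p_t^{(f)}$ of the almost-surely-unique maximizer is a well-defined element of $\c{M}_1(X)$, and it does not depend on $v$.

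Combining these facts, dominated convergence with dominating constant $\norm{v}_\infty$ yields
\[
\p_v\Gamma^*_t(f) = \lim_{\eps\to0^+}\E\, q_\eps = \E\, v(\alpha_v(f+\gamma_{t:T})) = \pair[1]{v}{p_t^{(f)}},
\]
which is the asserted formula; and since $p_t^{(f)}$ is a fixed probability measure, $v\mapsto\pair[1]{v}{p_t^{(f)}}$ is automatically linear and bounded by $\norm{v}_\infty$, so the Gâteaux derivative is a bounded linear functional.

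The step I expect to be the main obstacle is the identification of $\phi_\gamma'(0^+)$ with the value of $v$ at a specific, measurably-chosen maximizer, rather than merely with the largest slope $v(x)$ over $x\in\argmax(f+\gamma_{t:T})$, which is all the elementary convex-analytic theory provides — and a naive pointwise selection of maximizers need not even be measurable in $\gamma$. This is precisely what \Cref{lem:envelope_cont} is engineered to supply: a tie-breaking rule for which the variational value $s\mapsto v(\alpha_v(f+sv+\gamma_{t:T}))$ is right-continuous with left limits and is consistent across the affine slices $\cbr{f+sv : s\in\R}$, which both pins down $\phi_\gamma'(0^+)$ and secures joint measurability. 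Once the almost-sure uniqueness hypothesis is imposed this subtlety largely evaporates, since any choice of maximizer then agrees almost surely, and what remains is the routine dominated-convergence argument above.
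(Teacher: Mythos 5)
Your proposal is correct and follows essentially the same route as the paper: identify the pointwise G\^ateaux derivative of the supremum via the sharp Envelope Theorem and the tie-breaking construction of \Cref{lem:envelope_cont}, use the almost-sure uniqueness of maximizers to remove the dependence on the tie-breaking rule, and exchange limit and expectation by dominated convergence (your explicit uniform bound $\abs{q_\eps}\leq\norm{v}_\infty$, together with monotonicity of the difference quotients, is exactly the justification the paper abbreviates by citing convexity). The only cosmetic difference is that you establish the Lipschitz property directly from $\abs{\sup_x a(x)-\sup_x b(x)}\leq\sup_x\abs{a(x)-b(x)}$, whereas the paper deduces it afterwards from the boundedness of the derivative.
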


\begin{proof}
We first prove the second assertion.
Note first that, since $X$ is compact, $\c{Y}$ consists entirely of continuous functions, and $\gamma_{t:T}$ is almost surely continuous by assumption, the supremum $\sup_{x\in X} f(x) + \gamma_{t:T}(x)$ is almost surely achieved.
Mirroring the notation of \Cref{lem:envelope_sharp}, define $\c{V}^{(g)}(f) = \sup_{x\in X} f(x) + g(x)$, and $\c{L}^{(g)}(x,f) = f(x) + g(x)$, for which we have $\p_v \c{L}^{(g)}(x,f) = v(x)$, with differentiation taken in the second argument.
Write
\[
\p_v \Gamma^*_t(f) &= \p_v \E_{\gamma_{t:T}\~q^{(\gamma)}} \c{V}^{(\gamma_{t:T})}(f) 
\\
\overset{\t{(i)}}&= \E_{\gamma_{t:T}\~q^{(\gamma)}} \p_v \c{V}^{(\gamma_{t:T})}(f)
\\
\overset{\t{(ii)}}&= \E_{\gamma_{t:T}\~q^{(\gamma)}} \max_{x^* \in A(f + \gamma_{t:T})} v(x^*) 
\\
\overset{\t{(iii)}}&= \E_{x\~p_t^{(f)}} v(x)
\]
where (i) holds by a Dominated Convergence Theorem variant given by \textcite[Theorem 2.13]{legall2022measure} where all required conditions follow by convexity of $\Gamma^*_t$, and (ii) holds by \Cref{lem:envelope_sharp,lem:envelope_cont}, and (iii) holds because $p^{(f)}_t$ is the pushforward of $q^{(\gamma)}$ by $\gamma\mapsto\argmax_{x\in X}(f(x)+\gamma(x))$, using the assumption that maximizers are almost surely unique.
Since 
\[
\sup_{\norm{v}_\infty\leq 1} \pair[1]{v}{p^{(f)}_t} \leq 1
\]
the Lipschitz property, and the claim, follows.
\end{proof}

The next step is to establish a form of Fenchel duality induced by $\Gamma^*_t$.
To focus attention on regret-theoretic aspects, and given the generality of our setting, we adopt a minimalist approach which consists of deriving just-enough consequences of this duality for our calculation to go through.
To do so, let $C(X;\R)^*$ be the topological dual of $C(X;\R)$, and denote the canonical pairing by $\pair{\.}{\.} : C(X;\R) \x C(X;\R)^* \to \R$.
The resulting notational overlap is unambiguous: by the Reisz--Markov--Kakutani Representation Theorem \cite[Theorem 6.19]{rudin87},\footnote{Note that, while this reference proves the complex measure variant of this result, its real analog follows by the same argument. Also, recall that since we have assumed $X$ second-countable compact Hausdorff, the Borel regularity property in the aforementioned result holds for all of $\c{M}_s(X)$ by standard theory.} the dual $C(X;\R)^*$ in bijective isometry with $\c{M}_s(X)$, viewed as a Banach space under the total variation norm.

\begin{lemma}
\label{lem:biconjugate}
Define the Fenchel conjugate $\Gamma_t^{**} : C(X;\R)^* \to \cl\R$,  by
\[
\Gamma_t^{**}(\ell) = \sup_{f\in C(X;\R)} \pair[1]{f}{\ell} - \Gamma^*_t(f) 
\]
with the convention that signed measures are identified with the bounded linear functionals they give rise to by way of integration.
Then for $f\in C(X;\R)$ we have
\[
\Gamma_t^{**}(p^{(f)}_t) &= \pair[1]{f}{p^{(f)}_t} - \Gamma^*_t(f)
.
\]
\end{lemma}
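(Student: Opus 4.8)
The plan is to recognize the claimed identity as the equality case of the Fenchel--Young inequality, which holds precisely when the dual variable is a subgradient of $\Gamma^*_t$ at $f$ --- and by \Cref{lem:derivative}, $p^{(f)}_t$ is exactly such a subgradient. First I would unfold the definition of $\Gamma_t^{**}$ at $p^{(f)}_t$, namely $\Gamma_t^{**}(p^{(f)}_t) = \sup_{g\in C(X;\R)} \pair{g}{p^{(f)}_t} - \Gamma^*_t(g)$, and obtain the lower bound $\Gamma_t^{**}(p^{(f)}_t) \geq \pair{f}{p^{(f)}_t} - \Gamma^*_t(f)$ simply by taking $g = f$ in the supremum.

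For the matching upper bound, I would establish the subgradient inequality $\Gamma^*_t(g) \geq \Gamma^*_t(f) + \pair{g-f}{p^{(f)}_t}$ for every $g\in C(X;\R)$. This rests on two facts. First, $\Gamma^*_t$ is proper convex: it is the expectation over $\gamma_{t:T}\~q^{(\gamma)}$ of the map $f \mapsto \sup_{x\in X}(f(x)+\gamma_{t:T}(x))$, which is a supremum of the affine functionals $f \mapsto f(x)+\gamma_{t:T}(x)$ and hence convex, and convexity is preserved under expectation; finiteness follows from compactness of $X$ together with the global expected modulus of continuity, as already used in \Cref{lem:derivative}. Second, for any proper convex $G$ on a vector space and any direction $v$, the difference quotient $\eps \mapsto \del{G(f+\eps v) - G(f)}/\eps$ is non-decreasing in $\eps>0$ --- this reduces to the one-dimensional statement by restricting $G$ to the line $\eps \mapsto f+\eps v$ --- so its limit as $\eps\to 0^+$, which is $\p_v G(f)$, is bounded above by its value at $\eps = 1$, namely $G(f+v) - G(f)$. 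Applying this with $G = \Gamma^*_t$ and $v = g-f$, and then invoking \Cref{lem:derivative} to identify $\p_{g-f}\Gamma^*_t(f) = \pair{g-f}{p^{(f)}_t}$, yields $\Gamma^*_t(g) - \Gamma^*_t(f) \geq \pair{g-f}{p^{(f)}_t}$. Rearranging and taking the supremum over $g\in C(X;\R)$ gives $\Gamma_t^{**}(p^{(f)}_t) \leq \pair{f}{p^{(f)}_t} - \Gamma^*_t(f)$.

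Combining the two bounds gives the claimed equality. I do not anticipate a genuine obstacle here, since the argument is a routine consequence of convex duality; the only points requiring a little care are confirming that $\Gamma^*_t$ is finite-valued so that the Fenchel conjugate and directional derivatives are well-behaved --- handled by the regularity assumptions carried over from \Cref{lem:derivative} --- and that the supremum defining $\Gamma_t^{**}$ ranges over $C(X;\R)$, so that the choice $g = f$ is legitimate and the subgradient inequality need only be checked against continuous $g$.
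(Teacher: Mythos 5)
Your proof is correct and takes essentially the same route as the paper: the identity is the equality case of Fenchel--Young at the subgradient $p^{(f)}_t$ of $\Gamma^*_t$ at $f$, which the paper obtains by citing \textcite[Theorem 2.4.2(iii)]{zalinescu02} after noting $C(X;\R)$ is Hausdorff and locally convex. The only difference is that you prove the cited characterization inline --- deriving the subgradient inequality from \Cref{lem:derivative} via monotonicity of the convex difference quotient --- which makes the argument self-contained but is mathematically the same content.
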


\begin{proof}
\textcite[Theorem 2.4.2(iii)]{zalinescu02}: to apply this, we verify our our setting satisfies the assumptions assumed therein---which it does, since the space of continuous functions $C(X;\R)$ is Hausdorff and locally convex.
\end{proof}

\begin{lemma}
\label{lem:bregman_duality}
For $f,f'\in C(X;\R)$ and $\ell\in C(X;\R)^*$, define the \emph{Bregman divergences}
\[
D_{\Gamma^*_t}(f \from f') &= \Gamma^*_t(f) - \Gamma^*_t(f') - \pair[1]{f-f'}{p^{(f')}_t}
\\
D_{\Gamma^{**}_t}(\ell \from p^{(f)}_t) &= \Gamma^{**}_t(\ell) - \Gamma^{**}_t(p^{(f)}_t) - \pair[1]{f}{\ell - p^{(f)}_t}
.
\]
Then we have the Bregman duality formula
\[
D_{\Gamma^*_t}(f \from f') = D_{\Gamma_t^{**}}(p_t^{(f')} \from p_t^{(f)})
.
\]
\end{lemma}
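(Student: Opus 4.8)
The plan is to unfold both Bregman divergences from their definitions and verify that the two expressions coincide, using \Cref{lem:biconjugate} to convert the $\Gamma^{**}_t$-values into expressions involving $\Gamma^*_t$. First I would write out $D_{\Gamma^{**}_t}(p_t^{(f')} \from p_t^{(f)})$ using its definition, obtaining
\[
D_{\Gamma^{**}_t}(p_t^{(f')} \from p_t^{(f)}) = \Gamma^{**}_t(p_t^{(f')}) - \Gamma^{**}_t(p_t^{(f)}) - \pair[1]{f}{p_t^{(f')} - p_t^{(f)}}
.
\]
Then I would substitute the identity from \Cref{lem:biconjugate}, namely $\Gamma^{**}_t(p_t^{(g)}) = \pair[1]{g}{p_t^{(g)}} - \Gamma^*_t(g)$, applied with $g = f'$ and $g = f$ respectively. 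This turns the right-hand side into
\[
\del{\pair[1]{f'}{p_t^{(f')}} - \Gamma^*_t(f')} - \del{\pair[1]{f}{p_t^{(f)}} - \Gamma^*_t(f)} - \pair[1]{f}{p_t^{(f')}} + \pair[1]{f}{p_t^{(f)}}
.
\]
Collecting the pairing terms, the $\pair[1]{f}{p_t^{(f)}}$ contributions cancel, and what remains is $\Gamma^*_t(f) - \Gamma^*_t(f') + \pair[1]{f'}{p_t^{(f')}} - \pair[1]{f}{p_t^{(f')}} = \Gamma^*_t(f) - \Gamma^*_t(f') - \pair[1]{f - f'}{p_t^{(f')}}$, which is exactly $D_{\Gamma^*_t}(f \from f')$ by definition.

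The computation is essentially bookkeeping once \Cref{lem:biconjugate} is in hand, so there is no genuine obstacle — the only point requiring a moment of care is that \Cref{lem:biconjugate} is only asserted for elements of the form $p_t^{(g)}$, which is precisely the form in which it is being applied here (the first argument of $D_{\Gamma^{**}_t}$ is $p_t^{(f')}$, a valid such element, and the reference point is $p_t^{(f)}$). One should also note the sign/orientation convention: the divergence on the dual side swaps the roles of $f$ and $f'$ relative to the primal side, which is the standard behavior of Bregman divergences under Legendre–Fenchel duality, and the cancellation above confirms this orientation is the correct one. I would present the argument as a short displayed chain of equalities with a one-line invocation of \Cref{lem:biconjugate}.
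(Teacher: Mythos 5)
Your proposal is correct and is exactly the argument the paper intends: the paper's proof is the one-line remark that the identity ``follows immediately from \Cref{lem:biconjugate} and the respective definitions,'' and your displayed chain of substitutions and cancellations is precisely that computation written out. No gaps; the orientation swap you note is indeed the standard Fenchel-duality behavior and your bookkeeping confirms it.
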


\begin{proof}
This follows immediately from \Cref{lem:biconjugate} and the respective definitions.
\end{proof}

\begin{lemma}
\label{lem:bregman_conjugate}
We have
\[
D_{\Gamma^{**}_t}(\. \from p^{(f)}_t)^*(f') = \Gamma^*_t(f + f') + \Gamma^{**}_t(p^{(f)}_t) - \pair[1]{f}{p^{(f)}_t}
.
\]
\end{lemma}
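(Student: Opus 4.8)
The plan is to unfold the definition of the convex conjugate of the map $\ell \mapsto D_{\Gamma^{**}_t}(\ell \from p^{(f)}_t)$, isolate the terms that do not depend on $\ell$, and recognize the remaining supremum as a biconjugate of $\Gamma^*_t$, which the Fenchel--Moreau theorem collapses back to $\Gamma^*_t$ itself.

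First I would substitute the definition of the Bregman divergence from \Cref{lem:bregman_duality}, giving
\[
D_{\Gamma^{**}_t}(\. \from p^{(f)}_t)^*(f') = \sup_{\ell\in C(X;\R)^*} \pair[1]{f'}{\ell} - \Gamma^{**}_t(\ell) + \Gamma^{**}_t(p^{(f)}_t) + \pair[1]{f}{\ell - p^{(f)}_t}
.
\]
The terms $\Gamma^{**}_t(p^{(f)}_t)$ and $-\pair[0]{f}{p^{(f)}_t}$ are constant in $\ell$ and so can be pulled outside the supremum; grouping the remaining $\ell$-linear contributions as $\pair[0]{f+f'}{\ell}$ leaves
\[
D_{\Gamma^{**}_t}(\. \from p^{(f)}_t)^*(f') = \del{\sup_{\ell\in C(X;\R)^*} \pair[1]{f + f'}{\ell} - \Gamma^{**}_t(\ell)} + \Gamma^{**}_t(p^{(f)}_t) - \pair[1]{f}{p^{(f)}_t}
.
\]
By the definition of $\Gamma^{**}_t$ in \Cref{lem:biconjugate} as the Fenchel conjugate of $\Gamma^*_t$, the bracketed supremum is exactly the biconjugate $(\Gamma^*_t)^{**}$ evaluated at $f + f'$, taken with respect to the dual pair $\del{C(X;\R),\c{M}_s(X)}$. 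Invoking the Fenchel--Moreau theorem (e.g.\ \textcite[Theorem 2.3.3]{zalinescu02}) then gives $(\Gamma^*_t)^{**}(f+f') = \Gamma^*_t(f+f')$, and the claimed identity follows. Its hypotheses hold here: $\Gamma^*_t$ is proper, it is convex as an expectation of suprema of functions affine in $f$, and by \Cref{lem:derivative} it is Lipschitz with respect to the supremum norm, hence norm-continuous and therefore weakly lower semicontinuous.

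The only point requiring care is ensuring the biconjugate is formed in the correct topology. Since $\Gamma^{**}_t$ is a function on $\c{M}_s(X) \cong C(X;\R)^*$ under the Riesz--Markov--Kakutani identification already in force, and since $D_{\Gamma^{**}_t}(\. \from p^{(f)}_t)^*$ is only ever evaluated at elements $f'$ of $C(X;\R)$ itself, the relevant dual system is precisely $\del{C(X;\R),\c{M}_s(X)}$ and no passage to a larger bidual is needed; this is exactly the setting in which Fenchel--Moreau applies. Everything else is routine bookkeeping.
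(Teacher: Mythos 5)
Your proposal is correct and follows essentially the same route as the paper's proof: unfold the conjugate of the Bregman divergence, pull out the terms constant in $\ell$, recognize the remaining supremum as the biconjugate of $\Gamma^*_t$, and collapse it via Fenchel--Moreau (the paper cites the same result of Z\u{a}linescu and likewise obtains lower semicontinuity from the Lipschitz property in \Cref{lem:derivative}). Your additional remark on the dual pairing is a sensible clarification but does not change the argument.
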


\begin{proof}
Using the definition of a convex conjugate and linearity, write
\[
D_{\Gamma_t^{**}}(\. \from p^{(f)}_t)^*(f') &= \sup_{\ell\in C(X;\R)^*} \pair{f'}{\ell} - D_{\Gamma^{**}_t}(\ell \from p^{(f)}_t)
\\
&= \del{\sup_{\ell\in C(X;\R)^*} \pair{f + f'}{\ell} - \Gamma^{**}_t(\ell)}  + \Gamma^{**}_t(p^{(f)}_t) - \pair[1]{f}{p^{(f)}_t}
\\
&= \Gamma^*_t(f + f') + \Gamma^{**}_t(p^{(f)}_t) - \pair[1]{f}{p^{(f)}_t}
\]
where the final step follows by \textcite[Theorem 2.3.3(iii)]{zalinescu02}---where, to apply this result, lower semi-continuity of $\Gamma^*_t$ follows from the Lipschitz property established in \Cref{lem:derivative}.
\end{proof}

Note that, given our choice to mirror the notation of \textcite[Chapter 30.5]{lattimore20}, the definition of the FTRL regularizer is $\Gamma_t = \Gamma^{**}_t$: in what follows, we write the latter expression rather than the former for notational precision---recall that, in our formalism, $\Gamma^*_t$ is the primitive definition.
A point of technical subtlety is our definition of the Bregman divergence $D_{\Gamma^{**}_t}(\.\from\.)$: this definition essentially \emph{posits} what the respective derivative term appearing in it should be.
One can show, for virtual adversaries with large-enough support over the space of continuous functions, that this definition coincides with the standard one.
In cases where this fails, the respective Gâteaux derivative may not be linear: our definition then corresponds to picking an appropriate element from the respective subdifferential.
Using these calculations, we are finally ready to prove our Bregman-divergence-based bound on the excess regret.

\PropBregmanBound*

\begin{proof}
By definition
\[
E_{q^{(\gamma)}}(p,y) = \sum_{t=1}^T \ubr{\Gamma^*_{t+1}(y_{1:t}) - \Gamma^*_t(y_{1:t-1}) - \pair{y_t}{p_t} + \E\pair[0]{\gamma_t}{p_t^{(\gamma)}}}{E_t}
.
\]
We bound this term-by-term.
Recall that, 
Using this, write
\begingroup
\allowdisplaybreaks
\[
E_t &= \Gamma^*_{t+1}(y_{1:t}) - \Gamma^*_t(y_{1:t-1}) - \pair{y_t}{p_t} + \E\pair[0]{\gamma_t}{p_t^{(\gamma)}}
\\
\overset{\t{(i)}}&= \Gamma^*_{t+1}(y_{1:t}) - \Gamma^*_t(y_{1:t-1}) - \E\pair{y_t - \gamma_t}{p_t}
\\
&= \E_{\gamma_t\~q_t^{(\gamma)}} \pair{y_t - \gamma_t}{p_{t+1} - p_t} - \Gamma^*_{t+1}(y_{1:t-1} + \gamma_t) + \Gamma^*_{t+1}(y_{1:t}) + \pair{\gamma_t - y_t}{p_{t+1}}
\\
&= \E_{\gamma_t\~q_t^{(\gamma)}} \pair{y_t - \gamma_t}{p_{t+1} - p_t} - D_{\Gamma^*_{t+1}}(y_{1:t-1} + \gamma_t \from y_{1:t})
\\
\overset{\t{(ii)}}&= \pair{y_t}{p_{t+1} - p_t} - \E_{\gamma_t\~q_t^{(\gamma)}}  D_{\Gamma_{t+1}^{**}}(p_{t+1} \from p_{y_{1:t-1} + \gamma_t})
\\
\overset{\t{(iii)}}&= \E_{\gamma_t\~q_t^{(\gamma)}} \pair{y_t}{p_{t+1}} - \pair{y_t}{p_{y_{1:t-1} + \gamma_t}} - D_{\Gamma_{t+1}^{**}}(p_{t+1} \from p_{y_{1:t-1} + \gamma_t})
\\
\overset{\t{(iv)}}&\leq \E_{\gamma_t\~q_t^{(\gamma)}}D_{\Gamma_{t+1}^{**}}(\. \from p_{y_{1:t-1} + \gamma_t})^*(y_t) + D_{\Gamma_{t+1}^{**}}(p_{t+1} \from p_{y_{1:t-1} + \gamma_t})
\\
&\qquad- \pair{y_t}{p_{y_{1:t-1} + \gamma_t}} - D_{\Gamma_{t+1}^{**}}(p_{t+1} \from p_{y_{1:t-1} + \gamma_t})
\\
&= \E_{\gamma_t\~q_t^{(\gamma)}}D_{\Gamma_{t+1}^{**}}(\. \from p_{y_{1:t-1} + \gamma_t})^*(y_t) - \pair{y_t}{p_{y_{1:t-1} + \gamma_t}}
\\
\overset{\t{(v)}}&= \E_{\gamma_t\~q_t^{(\gamma)}} \Gamma^*_{t+1}(y_{1:t-1} + \gamma_t + y_t) - \pair{y_t}{p_{y_{1:t-1} + \gamma_t}}
\\
&\qquad+ \Gamma^{**}_{t+1}(p_{y_{1:t-1} + \gamma_t}) - \pair{y_{1:t-1} + \gamma_t}{p_{y_{1:t-1} + \gamma_t}} 
\\
\overset{\t{(vi)}}&= \E_{\gamma_t\~q_t^{(\gamma)}} \Gamma^*_{t+1}(y_{1:t-1} + \gamma_t + y_t) - \Gamma^*_{t+1}(y_{1:t-1} + \gamma_t) - \pair{y_t}{p_{y_{1:t-1} + \gamma_t}}
\\
&= \Gamma^*_t(y_{1:t}) - \Gamma^*_t(y_{1:t-1}) - \pair{y_t}{p_t}
\\
&= D_{\Gamma^*_t}(y_{1:t} \from y_{1:t-1})
.
\]
\endgroup
where (i) follow by equalization, since $\E\pair{\gamma_t}{p} = \E\pair{\gamma_t}{p'}$ for all $p$ and $p'$, (ii) follows by a combination of equalization and \Cref{lem:bregman_duality}, (iii) follows by the Tower Rule, (iv) follows by applying Young's inequality with respect to the convex function $\ell \mapsto D_{\Gamma_{t+1}^{**}}(\ell \from p_{y_{1:t-1} + \gamma_t})$, (v) follows by \Cref{lem:bregman_conjugate}, (vi) follows by \Cref{lem:biconjugate}, and all remaining lines follow by definition.
\end{proof}

\subsection{A Sharp Envelope Theorem}
\label{apdx:envelope}

We now prove the necessary Envelope Theorem used in \Cref{lem:derivative}.

\begin{lemma}
\label{lem:envelope}
Let $\Theta$ be a subset of a topological vector space, and let $X$ be an arbitrary set.
Let $\c{L} : X \x \Theta \to \R$ be bounded above, and define $\c{V}$ to be
\[
\c{V}(\theta) = \sup_{x\in X} \c{L}(x,\theta)
\]
Suppose that, for every $\theta\in\Theta$, the supremum is achieved, and let $\c{A}(\theta)$ be the maximizer set.
For any $v$, suppose that the Gâteaux derivatives $\p_v \c{V}(\theta)$ and $\p_v \c{L}(x,\theta)$ exist and are finite-valued, with the convention that the Gâteaux derivative of $\c{L}$ is taken in its second argument.
Then
\[
\sup_{x^*\in \c{A}(\theta)} \p_v \c{L}(x^*, \theta) \leq \p_v \c{V}(\theta)
.
\]
\end{lemma}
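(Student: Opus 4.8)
The plan is to prove this by the elementary ``frozen maximizer'' argument underlying the easy half of any envelope theorem: a maximizer $x^\star$ of $\c{L}(\cdot,\theta)$ furnishes a lower bound on $\c{V}$, namely the function $\theta'\mapsto\c{L}(x^\star,\theta')$, which touches $\c{V}$ from below at $\theta'=\theta$, so its one-sided directional derivative at $\theta$ cannot exceed that of $\c{V}$.

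Concretely, I would fix $\theta\in\Theta$ and an arbitrary $x^\star\in\c{A}(\theta)$, so that $\c{L}(x^\star,\theta)=\c{V}(\theta)$ by definition of the maximizer set. For $\eps>0$ small enough that $\theta+\eps v\in\Theta$ — which is exactly the range of $\eps$ over which the one-sided Gâteaux derivatives in the hypotheses are taken, per \Cref{def:gateaux} — the definition of $\c{V}$ as a supremum gives $\c{V}(\theta+\eps v)\geq\c{L}(x^\star,\theta+\eps v)$, while equality holds at $\eps=0$. Subtracting yields
\[
\c{V}(\theta+\eps v)-\c{V}(\theta)\;\geq\;\c{L}(x^\star,\theta+\eps v)-\c{L}(x^\star,\theta),
\]
and dividing by $\eps>0$ and letting $\eps\downarrow0$ — both limits existing and being finite by assumption — gives $\p_v\c{V}(\theta)\geq\p_v\c{L}(x^\star,\theta)$. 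Since $x^\star\in\c{A}(\theta)$ was arbitrary, taking the supremum over $\c{A}(\theta)$ on the right-hand side produces the stated inequality.

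I do not expect any real obstacle: the argument is essentially a one-line squeeze, and the only point requiring a moment's care is that the Gâteaux derivative in \Cref{def:gateaux} is one-sided, so dividing the displayed inequality by the positive scalar $\eps$ preserves its direction and the inequality survives passage to the limit. The genuinely delicate matter — the matching reverse inequality, which would upgrade this to an exact envelope formula and requires a suitable continuity of the maximizer selection — is not needed here; that is precisely what \Cref{lem:envelope_cont} is designed to supply when the sharp version of the envelope theorem is invoked in \Cref{lem:derivative}.
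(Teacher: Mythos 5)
Your argument is correct and is essentially identical to the paper's proof: both fix an arbitrary maximizer $x^\star\in\c{A}(\theta)$, use $\c{V}(\theta+\eps v)\geq\c{L}(x^\star,\theta+\eps v)$ together with equality at $\eps=0$ to compare difference quotients, and pass to the one-sided limit before taking the supremum over $\c{A}(\theta)$. No gaps.
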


\begin{proof}
For a given $\theta\in\Theta$, let $x^*(\theta) \in \c{A}(\theta)$ be an arbitrary maximizer. 
Observe that, for all $\theta'\in\Theta$ and all $x^*(\theta')\in A(\theta')$, we have
\[
\c{L}(x^*(\theta),\theta') \leq \c{L}(x^*(\theta'),\theta') = \c{V}(\theta')
\]
by optimality, with equality if $\theta' = \theta$.
Now, let $\eps>0$ be sufficiently small, and choose $\theta' = \theta + \eps v$.
Then, if we subtract $\c{L}(x^*(\theta), \theta) = \c{V}(\theta)$ from both sides and form difference quotients, we get
\[
\frac{\c{L}(x^*(\theta),\theta + \eps v) - \c{L}(x^*(\theta), \theta)}{\eps} \leq \frac{\c{V}(\theta + \eps v) - \c{V}(\theta)}{\eps}
.
\]
Since we have assumed both derivatives to exist, taking limits gives
\[
\p_v \c{L}(x^*(\theta),\theta) \leq \p_v \c{V}(\theta)
\]
for all choices $x^*(\theta)\in \c{A}(\theta)$.
\end{proof}

\begin{lemma}
\label{lem:envelope_sharp}
With the assumptions and notation of \Cref{lem:envelope}, suppose further that:
\1 For all $x$, $\c{L}(x,\theta)$ is affine in $\theta$, namely $\c{L}(x,\theta) = \c{L}_0(x,\theta) + m(x)$ where $\c{L}_0$ is linear in $\theta$.
\2 For any $\theta$ and any $v$, there exists $x^*_v(\theta) \in \c{A}(\theta)$ such that, for all sufficiently-small $\eps>0$, there exist $x^*_v(\theta + \eps v) \in A(\theta + \eps v)$, such that we have $\c{L}_0(x^*_v(\theta+\eps v),v) \to \c{L}_0(x^*_v(\theta),v)$.
\0 
Then the supremum over $\c{A}(\theta)$ in \Cref{lem:envelope} is achieved---not necessarily uniquely---and we have
\[
\max_{x^*\in \c{A}(\theta)} \p_v \c{L}(x^*, \theta) = \p_v \c{L}(x^*_v(\theta),\theta) = \p_v \c{V}(\theta)
.
\]
\end{lemma}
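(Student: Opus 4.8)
The plan is to obtain the claimed equalities by pairing the one-sided inequality already established in \Cref{lem:envelope} with a matching reverse inequality, which is where assumptions (i) and (ii) do the work. First I would record the elementary consequence of affineness: since $\c{L}(x,\theta+\eps v) = \c{L}_0(x,\theta) + \eps\,\c{L}_0(x,v) + m(x)$ by linearity of $\c{L}_0$ in its second argument, the difference quotient $\eps^{-1}\del{\c{L}(x,\theta+\eps v) - \c{L}(x,\theta)}$ equals $\c{L}_0(x,v)$ \emph{exactly}, for every $\eps$ and every $x$. Hence $\p_v\c{L}(x,\theta) = \c{L}_0(x,v)$ for all $x,\theta$; in particular this quantity is finite and depends on $x^*$ only through its linear contribution, not on the base point $\theta$.

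Next I would prove $\p_v\c{V}(\theta) \leq \p_v\c{L}(x^*_v(\theta),\theta)$. Fix $\theta$ and $v$, take the maximizer $x^*_v(\theta)\in\c{A}(\theta)$ supplied by assumption (ii), and for all sufficiently small $\eps>0$ take the corresponding $x^*_v(\theta+\eps v)\in\c{A}(\theta+\eps v)$. Using that $x^*_v(\theta)$ attains $\c{V}(\theta)$, that $x^*_v(\theta+\eps v)$ attains $\c{V}(\theta+\eps v)$, and then the exact difference quotient above, we get
\[
\c{V}(\theta+\eps v) - \c{V}(\theta) &= \c{L}(x^*_v(\theta+\eps v),\theta+\eps v) - \c{L}(x^*_v(\theta),\theta)
\\
&\leq \c{L}(x^*_v(\theta+\eps v),\theta+\eps v) - \c{L}(x^*_v(\theta+\eps v),\theta)
\\
&= \eps\,\c{L}_0(x^*_v(\theta+\eps v),v)
.
\]
Dividing by $\eps$ and letting $\eps\to0^+$: the left-hand side converges to $\p_v\c{V}(\theta)$, which exists and is finite by the standing hypothesis inherited from \Cref{lem:envelope}, while the right-hand side converges to $\c{L}_0(x^*_v(\theta),v) = \p_v\c{L}(x^*_v(\theta),\theta)$ by the continuity clause of assumption (ii). This gives the reverse inequality.

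Finally I would chain the three bounds. \Cref{lem:envelope} provides $\sup_{x^*\in\c{A}(\theta)}\p_v\c{L}(x^*,\theta) \leq \p_v\c{V}(\theta)$, and $x^*_v(\theta)\in\c{A}(\theta)$ gives $\p_v\c{L}(x^*_v(\theta),\theta) \leq \sup_{x^*\in\c{A}(\theta)}\p_v\c{L}(x^*,\theta)$, so together with the previous step
\[
\p_v\c{L}(x^*_v(\theta),\theta) \leq \sup_{x^*\in\c{A}(\theta)}\p_v\c{L}(x^*,\theta) \leq \p_v\c{V}(\theta) \leq \p_v\c{L}(x^*_v(\theta),\theta)
,
\]
so all four quantities coincide; in particular the supremum over $\c{A}(\theta)$ is attained at $x^*_v(\theta)$, hence is a maximum. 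I do not anticipate a serious obstacle: the only delicate points are that the difference quotients for $\c{V}$ genuinely converge (not merely have a limsup), which is exactly what the hypothesis "$\p_v\c{V}(\theta)$ exists and is finite-valued" guarantees, and that the selection $x^*_v(\theta+\eps v)$ in assumption (ii) is used consistently for the \emph{same} $\eps$ on both sides of the displayed inequality so that the cross terms cancel exactly.
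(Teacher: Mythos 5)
Your proof is correct and follows essentially the same route as the paper's: the same optimality-based splitting of the difference quotient, the same exact cancellation from affineness, the same use of assumption (ii) to pass to the limit, and the same final chaining with the inequality from \Cref{lem:envelope}. No issues.
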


\begin{proof}
Note first, by optimality, that $\c{L}(x^*(\theta + \eps v),\theta) \leq \c{L}(x^*(\theta),\theta)$.
Using this, for any $\eps>0$,~write
\[
\frac{\c{V}(\theta + \eps v) - \c{V}(\theta)}{\eps} &= \frac{\c{L}(x^*(\theta + \eps v),\theta + \eps v) - \c{L}(x^*(\theta),\theta)}{\eps}
\\
&= \frac{\c{L}(x^*(\theta + \eps v),\theta + \eps v) - \c{L}(x^*(\theta + \eps v),\theta)}{\eps}
\\
&\quad+ \frac{\c{L}(x^*(\theta + \eps v),\theta) - \c{L}(x^*(\theta),\theta)}{\eps}
\\
\overset{\t{(i)}}&\leq \frac{\c{L}(x^*(\theta + \eps v),\theta + \eps v) - \c{L}(x^*(\theta + \eps v),\theta)}{\eps}
\\
\overset{\t{(ii)}}&= \c{L}_0(x^*(\theta + \eps v),v)
\]
where (i) follows by the aforementioned optimality inequality, and (ii) follows because $\c{L}$ was assumed affine.
Next, we choose $x^*(\theta) = x^*_v(\theta)$ and $x^*(\theta+\eps v) = x^*_v(\theta+\eps v)$ according to the claim's assumption, and use this assumption to take limits of both sides: this gives
\[
\p_v \c{V}(\theta) \leq \c{L}_0(x^*_v(\theta), v) = \p_v \c{L}(x^*_v(\theta), \theta)
\]
where the final equality is the expression for the Gâteaux derivative of an affine function.
Combining this expression with the \Cref{lem:envelope}, we conclude that the respective inequality is tight, and that each $x^*_v(\theta)$ achieves the supremum over $\c{A}(\theta)$.
The claim follows.
\end{proof}

\section{Minimax Regret Bounds for Gaussian Virtual Adversaries}
\label{apdx:gaussian}

We now study the behavior of Gaussian virtual adversaries, beginning with their prior regret.

\LemPrior*

\begin{proof}
Immediate.
\end{proof}

Since this quantity is necessarily setting-dependent, we proceed to analyze excess regret.
For this, we need a result which lets us move from the Bregman divergence on the infinite-dimensional space to a Bregman divergence on a finite-dimensional space obtained from a cover.
Using this, we will bound the Bregman divergence on the cover by quantities which are independent of cover size, and then take a limit to remove the extra term arising from the cover at the very end.
The reason we adopt this approach---rather than working with the infinite-dimensional form directly---is because it will significantly streamline the key argument, allowing us to work with simple matrices and vectors, and avoid the need to condition on probability-zero events or handle other analytic subtleties.

We say that a function $f$ has a global modulus of continuity if there is a continuous function $\omega : \cl\R^+ \to \cl\R^+$ with $\omega(0) = 0$ which satisfies
\[
\sup_{\norm{x-x'}<h}\abs{f(x)-f(x')}\leq \omega(h)
.
\]
and say that a stochastic process has a global expected modulus of continuity if this expression holds in expectation.
Note that, since $X$ is compact, all continuous functions admit a global modulus of continuity.

\begin{lemma}
\label{lem:cover}
Let $X_h \subseteq X$ be a cover with radius $h$, and let $\psi : \cl\R_+ \to \cl\R_+$ be the expected global modulus of continuity of $\gamma\~[GP](0,k)$.
Then, letting $\omega_t$ be the global modulus of continuity of $y_{1:t}$, we have
\[
D_{\Gamma^*_t}(y_{1:t} \from y_{1:t-1}) & \leq  D_{\Gamma_{h,t}^*}(y_{h,1:t} \from y_{h,1:t-1}) + \c{O}(\psi(h)) + \c{O}(\omega_t(h))
\]
where $y_{h,1:t} = y_{1:t}(X_h)$, and $\displaystyle\Gamma^*_{h,t}(f) = \E \max_{x\in X_h} \del{f(x) + \gamma_{t:T}(x)}$.
\end{lemma}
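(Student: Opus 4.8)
The plan is to split each Bregman divergence into a value part and a linear (subgradient) part and compare the two pieces separately. With $f=y_{1:t}$ and $f'=y_{1:t-1}$, so $f-f'=y_t$, \Cref{lem:derivative} gives $D_{\Gamma^*_t}(y_{1:t}\from y_{1:t-1})=\Gamma^*_t(y_{1:t})-\Gamma^*_t(y_{1:t-1})-\pair[1]{y_t}{p^{(y_{1:t-1})}_t}$, where $p^{(g)}_t$ is the law of $\argmax_{x\in X}(g(x)+\gamma_{t:T}(x))$ and the pairing is finite by \Cref{lem:derivative}. Since $X_h$ is finite and $\gamma_{t:T}$ restricted to $X_h$ is a nondegenerate Gaussian vector (the kernel is strictly positive definite), the same computation applied to $\Gamma^*_{h,t}$ is elementary and yields $D_{\Gamma^*_{h,t}}(y_{h,1:t}\from y_{h,1:t-1})=\Gamma^*_{h,t}(y_{h,1:t})-\Gamma^*_{h,t}(y_{h,1:t-1})-\pair[1]{y_{h,t}}{p^{(y_{h,1:t-1})}_{h,t}}$, with $p^{(g)}_{h,t}$ the law of the almost-surely-unique $\argmax$ over $X_h$. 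It therefore suffices to bound (i) the difference of the $\Gamma^*$-values and (ii) the difference of the subgradient terms.

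For (i), I would prove the sandwich estimate: for every continuous $f$ with global modulus of continuity $\omega_f$, writing $f_h$ for its restriction to $X_h$, $\Gamma^*_{h,t}(f_h)\le\Gamma^*_t(f)\le\Gamma^*_{h,t}(f_h)+\omega_f(h)+\sqrt{T-t+1}\,\psi(h)$. The left inequality is monotonicity of the supremum under $X_h\subseteq X$. For the right inequality, fix a sample path of $\gamma_{t:T}$, let $x^\star$ attain $\sup_{x\in X}(f(x)+\gamma_{t:T}(x))$---attained by compactness of $X$ and continuity---and choose a cover point $x_h\in X_h$ with $\norm{x^\star-x_h}<h$; then $(f+\gamma_{t:T})(x^\star)-(f+\gamma_{t:T})(x_h)\le\omega_f(h)+\omega_{\gamma_{t:T}}(h)$, and taking expectations uses $\E\,\omega_{\gamma_{t:T}}(h)=\sqrt{T-t+1}\,\psi(h)$, which holds because $\gamma_{t:T}=\sum_{\tau=t}^{T}\gamma_\tau$ has, as a process, the law of $\sqrt{T-t+1}\,\gamma_1$. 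Applying the sandwich to $f=y_{1:t}$ and $f=y_{1:t-1}$, and letting $\omega_t$ dominate the moduli of both (e.g.\ the Lipschitz modulus, which is monotone in $t$ in the settings of interest), gives $\Gamma^*_t(y_{1:t})-\Gamma^*_t(y_{1:t-1})\le\Gamma^*_{h,t}(y_{h,1:t})-\Gamma^*_{h,t}(y_{h,1:t-1})+\c{O}(\omega_t(h))+\c{O}(\psi(h))$.

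The remaining piece, and the main obstacle, is (ii): to show $\pair[1]{y_{h,t}}{p^{(y_{h,1:t-1})}_{h,t}}-\pair[1]{y_t}{p^{(y_{1:t-1})}_t}=\E\!\sbr{y_t(x^\star_h)-y_t(x^\star)}$ vanishes as $h\to0$, where $x^\star=\argmax_{x\in X}(y_{1:t-1}+\gamma_{t:T})(x)$ and $x^\star_h=\argmax_{x\in X_h}(y_{1:t-1}+\gamma_{t:T})(x)$. The trouble is that $x^\star_h$ is only a \emph{near}-maximizer over $X$---within $\omega_{t-1}(h)+\omega_{\gamma_{t:T}}(h)$ of the supremum, by comparing with the nearest cover point to $x^\star$---and, since $\argmax$ is not a continuous operation, this alone does not force $x^\star_h$ close to $x^\star$. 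I would get around this using almost-sure uniqueness of the maximizer of $y_{1:t-1}+\gamma_{t:T}$---the same no-ties property underpinning \Cref{lem:derivative}, valid for the Gaussian process prior: by compactness of $X$, every subsequence of $(x^\star_h)_h$ has a convergent subsubsequence whose limit maximizes the continuous function $y_{1:t-1}+\gamma_{t:T}$ over $X$ (because $(y_{1:t-1}+\gamma_{t:T})(x^\star_h)\ge(y_{1:t-1}+\gamma_{t:T})(\pi_h(x^\star))\to(y_{1:t-1}+\gamma_{t:T})(x^\star)$ for any nearest-point selection $\pi_h:X\to X_h$), hence equals $x^\star$; thus $x^\star_h\to x^\star$ almost surely, so $y_t(x^\star_h)\to y_t(x^\star)$ by continuity of $y_t$, and $\E[y_t(x^\star_h)-y_t(x^\star)]\to0$ by dominated convergence since $\norm{y_t}_\infty\le\beta$. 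Since \Cref{lem:cover} is only ever used in the limit $h\to0$, this vanishing contribution is harmless and may be folded into the $\c{O}(\psi(h))+\c{O}(\omega_t(h))$ error term.

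Adding the estimate of the second paragraph to that of the third yields $D_{\Gamma^*_t}(y_{1:t}\from y_{1:t-1})\le D_{\Gamma^*_{h,t}}(y_{h,1:t}\from y_{h,1:t-1})+\c{O}(\psi(h))+\c{O}(\omega_t(h))$, which is the claim. Apart from the argmax-stability step, the argument is just the elementary supremum-approximation bound together with the Gaussian scaling identity $\gamma_{t:T}\overset{d}{=}\sqrt{T-t+1}\,\gamma_1$; it is the comparison of the two subgradient terms where the no-ties hypothesis on the prior does the real work, which is exactly why this lemma---and the whole finite-cover reduction behind \Cref{thm:bregman_hessian_bound}---stays within the world of priors with almost surely unique maximizers.
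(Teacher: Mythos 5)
Your proposal is essentially correct and follows the same basic strategy as the paper: write the Bregman divergence in terms of evaluations at maximizers, replace maximizers over $X$ by maximizers over the cover $X_h$, and pay for the replacement with moduli of continuity. The bookkeeping differs: the paper works with the combined probabilistic form $D_{\Gamma^*_t}(y_{1:t}\from y_{1:t-1})=\E\big[\Upsilon(x^*_{\Upsilon})-\Upsilon(x^*_{\Upsilon-y_t})\big]$ with $\Upsilon=y_{1:t}+\gamma_{t:T}$, and passes to the cover separately for the deterministic and stochastic parts of $\Upsilon$, whereas you split into the value difference $\Gamma^*_t(y_{1:t})-\Gamma^*_t(y_{1:t-1})$ and the subgradient term. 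These are equivalent reorganizations, and your sandwich estimate for the value part (monotonicity of the supremum on one side, nearest-cover-point comparison plus $\E\,\omega_{\gamma_{t:T}}(h)=\sqrt{T-t+1}\,\psi(h)$ on the other) is exactly the quantitative content of the paper's argument for the terms where the function being evaluated matches the function being maximized.

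The one substantive discrepancy is your term (ii), and you have correctly identified it as the crux: the quantity $\E\big[y_t(x^*_h)-y_t(x^*)\big]$, where $x^*$ and $x^*_h$ are the maximizers of $y_{1:t-1}+\gamma_{t:T}$ over $X$ and $X_h$ respectively, cannot be bounded by a modulus of continuity in $h$ without knowing the two maximizers are close, since $x^*_h$ is only a near-maximizer over $X$. Your compactness-plus-unique-maximizer argument shows this term is $o(1)$ as $h\to0$, which is weaker than the $\c{O}(\psi(h))+\c{O}(\omega_t(h))$ rate the lemma literally asserts; as you note, this suffices for the lemma's only use (the limit $h\to0$ in \Cref{thm:bregman_hessian_bound}), but strictly speaking you prove a weaker statement. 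It is worth knowing that the paper's own proof faces the identical term---after telescoping, it bounds differences such as $y_{1:t}(x^*_\Upsilon)-y_{1:t}(x^*_\Upsilon|_{X_h})$ by $\omega_t(h)$, which implicitly assumes the two maximizers are within distance $h$ of one another and is not justified there---so your treatment is, if anything, the more careful one on precisely the delicate step, at the cost of a non-quantitative error term. If you wanted the stated rate, you would need a quantitative argmax-stability estimate (e.g.\ an anti-concentration bound on the gap between the maximum and the best value outside a neighborhood of $x^*$), which neither proof supplies.
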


\begin{proof}
Recall that our Bregman divergence can be written in probabilistic form as
\[
D_{\Gamma^*_t}(y_{1:t} \from y_{1:t-1}) = \E ((y_{1:t} + \gamma_{t:T})(x^*_{y_{1:t} + \gamma_{t:T}}) - (y_{1:t} + \gamma_{t:T})(x^*_{y_{1:t-1} + \gamma_{t:T}}))
\]
where we pass to the cover in stages, separately for the deterministic and random part of the sum.
To ease notation, let $\Upsilon = y_{1:t} + \gamma_{t:T}$.
For the deterministic part let its global modulus of continuity be denoted by $\omega_t$: this always exists, since $y_t$ is assumed continuous and $X$ is compact.
With this, we have
\[
y_{1:t}(x^*_\Upsilon) -  y_{1:t}(x^*_{\Upsilon - y_t}) &= y_{1:t}(x^*_\Upsilon) -  y_{1:t}(x^*_\Upsilon|_{X_h}) + y_{1:t}(x^*_\Upsilon|_{X_h})
\\
&\quad- y_{1:t}(x^*_{\Upsilon - y_t}) + y_{1:t}(x^*_{\Upsilon - y_t}|_{X_h}) -  y_{1:t}(x^*_{\Upsilon - y_t}|_{X_h})
\\
&\leq  y_{1:t}(x^*_\Upsilon|_{X_h}) -  y_{1:t}(x^*_{\Upsilon - y_t}|_{X_h}) + 2\omega_t(h)
\]
where the notation $x^*_\Upsilon|_{X_h}$ refers to the maximizer of $\Upsilon$ over $X_h$, and similarly for $x^*_{\Upsilon-y_t}$.
For the stochastic part, we have
\[
\E(\gamma_{t:T}(x^*_{y_{1:t} + \gamma_{t:T}}) - \gamma_{t:T}(x^*_{y_{1:t-1} + \gamma_{t:T}})) &\leq \gamma_{t:T}(x^*_\Upsilon|_{X_h}) -  \gamma_{t:T}(x^*_{\Upsilon - y_t}|_{X_h}) 
\\
&\quad+ 2\sqrt{T-t+1} \psi(h)
\]
where $\psi(h)$ is the expected global modulus of continuity for $\gamma$ at $h$.
\end{proof}

We will need a certain property of truncated multivariate normals.
Let the notation $z\~\f{TN}(\mu, \Sigma; \ell, u)$ refer to a truncated multivariate with mean $\mu$, covariance kernel $k$, lower truncation level $\ell$, and upper truncation level $u$.
To ensure well-definedness, we assume $\ell < \mu \leq u$.

\begin{lemma}
\label{lem:truncated_normal}
Let $z\~[TN](\mu,\Sigma;-\infty,\alpha)$ where $\alpha\in\R^d$, and $\Sigma$ is strictly positive definite.
Then 
\[
\E(z) = \mu - \Sigma (p_{z_i}(\alpha_i))_{i=1}^n
\]
where $p_{z_i}$ is the marginal probability density of $z_i$.
\end{lemma}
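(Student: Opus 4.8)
The plan is to obtain the identity by integrating by parts against the untruncated Gaussian density, in the spirit of the classical truncated-moment computations of Tallis. Write $\phi(x)$ for the density of $N(\mu,\Sigma)$, let $R = \prod_{i=1}^{d}(-\infty,\alpha_i]$ be the truncation region, and let $Z_\alpha = \int_R \phi(x)\,dx = P(W\le\alpha)$ for $W\sim N(\mu,\Sigma)$ denote the normalizing constant of the truncated law. Since $\Sigma$ is strictly positive definite, $\phi$ is everywhere positive, so $Z_\alpha>0$ and the truncated law $z\sim\f{TN}(\mu,\Sigma;-\infty,\alpha)$, together with all quantities appearing below, is well-defined. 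The one analytic input is the elementary gradient identity $\nabla_x\phi(x) = -\Sigma^{-1}(x-\mu)\phi(x)$, equivalently $(x-\mu)\phi(x) = -\Sigma\,\nabla_x\phi(x)$.

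First I would integrate this vector identity over $R$. The left-hand side is $\int_R (x-\mu)\phi(x)\,dx = Z_\alpha\,(\E(z)-\mu)$ directly from the definition of the truncated law, the integral converging because the Gaussian tail dominates the linear factor. For the right-hand side I would evaluate $\int_R \nabla_x\phi(x)\,dx$ coordinatewise: in the $i$-th coordinate, Fubini permits integrating the partial derivative $\partial_{x_i}\phi$ first over $x_i\in(-\infty,\alpha_i]$, and the fundamental theorem of calculus collapses this inner integral to $\phi$ evaluated at $x_i=\alpha_i$, the contribution at $x_i\to-\infty$ vanishing. What remains is $\int_{R_{-i}}\phi(x)\big|_{x_i=\alpha_i}\,dx_{-i}$, where $R_{-i}=\prod_{j\neq i}(-\infty,\alpha_j]$ and $x_{-i}$ collects the remaining coordinates.

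Next I would identify this boundary integral with the marginal density appearing in the statement. By definition, the marginal density of $z_i$ under the truncated law is $p_{z_i}(w) = Z_\alpha^{-1}\int_{R_{-i}}\phi(x)\big|_{x_i=w}\,dx_{-i}$ for $w\le\alpha_i$; evaluating at $w=\alpha_i$ gives exactly $\int_{R_{-i}}\phi(x)\big|_{x_i=\alpha_i}\,dx_{-i} = Z_\alpha\,p_{z_i}(\alpha_i)$. Substituting both sides back into the integrated identity yields $Z_\alpha(\E(z)-\mu) = -\Sigma\,Z_\alpha\,(p_{z_i}(\alpha_i))_{i=1}^{d}$, and dividing through by $Z_\alpha>0$ gives the claim. (Equivalently, one may phrase the same computation as Stein's identity applied to $g(x)=\1\{x\le\alpha\}$, whose distributional gradient produces the same boundary terms.)

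I do not anticipate a genuine obstacle: the argument is essentially bookkeeping. The two places that merit a line of justification are (i) the interchange of integration with differentiation and the use of the fundamental theorem of calculus in the inner integral, which is routine since $\phi$ is smooth with integrable partial derivatives; and (ii) correctly reading the boundary integral as $Z_\alpha$ times the \emph{truncated} marginal density $p_{z_i}$ at the truncation level $\alpha_i$, rather than as the untruncated marginal — this is the only spot where one can slip.
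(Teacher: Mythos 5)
Your proof is correct. The paper does not actually prove this lemma---it simply cites \textcite[Chapter 45]{kotz00} for the first-moment formula of a truncated multivariate normal---so your integration-by-parts derivation supplies the argument the paper delegates to the literature, and it is the standard one (going back to Tallis's truncated-moment computations). The two steps you single out are exactly the right ones to be careful about: the Fubini/fundamental-theorem collapse of $\int_R \partial_{x_i}\phi$ to the boundary slice is legitimate because $\partial_{x_i}\phi = -(\Sigma^{-1}(x-\mu))_i\,\phi$ is absolutely integrable over $R$, and you correctly identify the resulting boundary integral as $Z_\alpha$ times the \emph{truncated} marginal density $p_{z_i}$ evaluated at the truncation level $\alpha_i$, which is the one place a sign-of-life error typically creeps in (confusing it with the untruncated marginal would introduce a spurious factor of $Z_\alpha$). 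The only cosmetic remark is that the statement's indexing ($\alpha\in\R^d$ but the vector runs to $n$) is an inconsistency in the paper itself, not in your argument. What your route buys over the paper's is self-containedness; what the citation buys is brevity and access to the more general two-sided and higher-moment formulas in the same reference, none of which are needed here.
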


\begin{proof}
\textcite[Chapter 45]{kotz00}.
\end{proof}

We are now ready to prove the key Hessian bound.

\ThmHessianBound*

\begin{proof}
We first pass to the cover, using \Cref{lem:cover}.
There, by Taylor's Theorem, we have 
\[
D_{\Gamma^*_{h,t}}(y_{h,1:t} \from y_{h,1:t-1}) &= \frac{1}{2} \int_0^1 y_{h,t}^T \grad^2\Gamma^*_{h,t}(y_{1:t-1} + \theta y_t) y_{h,t} \d\theta 
\\
&= \frac{1}{2\sqrt{T-t+1}} \int_0^1 \E y_{h,t}^T \1_{i^*_{y_{1:t-1} + \theta y_t +\gamma_{t:T}}} \gamma_h^T K_h^{-1} y_{h,t} \d\theta
\]
where $(K_h)_{ij} = k(x_i, x_j)$ is the kernel matrix, $\gamma_h\~[N](0,K_h)$, $\1_j$ refers to a vector which is equal to one at index $j$ and zero otherwise, and the explicit formula, following \textcite{abernethy2016perturbation}, can be derived using Gaussian integration by parts.
To ease notation, we let $f = y_{1:t-1} + \theta y_t +\gamma_{t:T}$, and omit $h$-subscripts and time-subscripts henceforth. 
We first apply the Tower Rule to condition the expression on $i^*_{f+\gamma} = i$ and $(f+\gamma)_i = \alpha$ to obtain
\[
\E y^T \1_{f+\gamma} \gamma^T K^{-1} y &= \E y_i \E\del{\gamma^T K^{-1} y \given i^*_{f+\gamma} = i, (f+\gamma)_i = \alpha}
\\
&= \E y_i \E\del{f + \gamma \given i^*_{f+\gamma} = i, (f+\gamma)_i = \alpha}^T K^{-1} y - f^T K^{-1} y
\]
where the outer expectation is taken over $i$, $\alpha$.
Given the index $i$ and value $\alpha$, the random vector $f+\gamma$ is a truncated multivariate Gaussian, specifically
\[
f + \gamma \~[TN](f + K_{(\.)i} K_{ii}^{-1} (\alpha - f_i), K - K_{(\.)i} K_{ii}^{-1} K_{i(\.')}; -\infty, \alpha)
.
\]
We now take the inner expectation: by a combination of \Cref{lem:truncated_normal} for $j\neq i$, and being equal to $\alpha$ for $j=i$, the expectation is equal to
\[
\E\del{f+\gamma  \given i^*_{f+\gamma} = i, (f+\gamma)_i = \alpha} &= f + K_{(\.)i} K_{ii}^{-1} \gamma_i
\\
&\quad- \del{K_{(\.,\.')} - K_{(\.)i} K_{ii}^{-1} K_{i(\.')}}p_{(f+\gamma)(\.')}(\alpha)
\]
where $p_{(f+\gamma)(x)}(\.)$ is the marginal density of the truncated normal, and the vector $p_{(f+\gamma)(\.')}(\alpha)$ is defined at all points except $i$ and therefore has dimension $|X_h| - 1$.
Together, this gives 
\[
&\E\del{\gamma \given i^*_{f+\gamma} = i, (f+\gamma)_i = \alpha}^T K^{-1} y
\\
&\quad= y^T K^{-1} K_{(\.)i} K_{ii}^{-1} \gamma_i - y^T (K^{-1} K_{(\.,\.')} - K^{-1} K_{(\.)i} K_{ii}^{-1} K_{i(\.')}) p_{(f+\gamma)(\.')}(\alpha)
\\
&\quad= y_i K_{ii}^{-1} \gamma_i - y^T (I_{-i} - K^{-1} K_{(\.)i} K_{ii}^{-1} K_{i(\.')}) p_{(f+\gamma)(\.')}(\alpha)
\\
&\quad= y_i K_{ii}^{-1} \gamma_i - \del{y_{-i} - K_{(\.)i} K_{ii}^{-1} y_i}^T p_{(f+\gamma)(\.)}(\alpha)
\]
where we have used the interpolation formula $y^T K^{-1} K_{(\.)i} = y_i$ for both terms, $I_{-i}$ is a rectangular matrix with $|X_h|$ rows and $|X_h| - 1$ columns, which is equal to the identity matrix but with the $i$th column removed, and the subscript of $y_{-i}$ refers to all indices except $i$.
Before proceeding further, we prove a certain identity involving terms which arise from the Hessian at hand.
Letting $1$ be the constant vector, note by symmetry of the Hessian, without any conditioning, that for all $w$ we have
\[
\E w^T \1_{i^*_{f+\gamma}} \gamma^T K^{-1} 1 = \E \gamma^T K^{-1} w = 0
.
\]
Conditioning on $i^*_{f+\gamma} = i$, we obtain 
\[
\E w_i \E\del{\gamma^T K^{-1} 1\given i^*_{f+\gamma}=i} = 0
\]
which, since this holds for all $w$, implies
\[
\E\del{\gamma^T K^{-1} 1 \given i^*_{f+\gamma}=i} = \E\del{\gamma \given i^*_{f+\gamma}=i}^T K^{-1} 1 = 0
.
\]
Plugging the form calculated above into this expression gives the identity
\[ 
\E \del{1 - K_{(\.)i} K_{ii}^{-1}}^T p_{(f+\gamma)(\.)}(\alpha) = \E K_{ii}^{-1} \gamma_i
.
\]
We now start with the bounds: write
\[
&\E y^T \1_{i^*_{f+\gamma}} \gamma^T K^{-1} y = \E y_i \E\del{\gamma^T K^{-1} y \given i^*_{f+\gamma} = i, (f+\gamma)_i = \alpha}
\\
&\quad= \E y_i^2 K_{ii}^{-1} \gamma_i - y_i \del{y_{-i} - K_{(\.)i} K_{ii}^{-1} y_i}^T p_{(f+\gamma)(\.)}(\alpha)
\\
&\quad= \E \frac{y(x_i)^2 \gamma(x_i)}{k(x_i,x_i)} - y(x_i) \sum_{j\neq i} \del{y(x_j) - \frac{k(x_i,x_j)}{k(x_i,x_i)} y(x_i)} p_{(f+\gamma)(x_j)}(\alpha)
\\
&\quad\leq \E \frac{y(x_i)^2 \gamma(x_i)}{k(x_i,x_i)} + C_{Y,k} |y(x_i)| \E \sum_{j\neq i}\del{1 - \frac{k(x_i,x_j)}{k(x_i,x_i)}} p_{(f+\gamma)(x_j)}(\alpha)
\\
&\quad= \E (y(x_i)^2 + C_{Y,k} |y(x_i)|)\frac{\gamma(x_i)}{k(x_i,x_i)}
\]
where the outer expectations are over $i$ and inner expectations are over $\alpha$, the inequality follows by assumption, and final equality follows by the derived identity together with positivity of $p_{(f+\gamma)(\.)}(\alpha)$.
Collecting terms, we obtain
\[
D_{\Gamma^*_t}(y_{1:t} \from y_{1:t-1}) &\leq \frac{1}{2\sqrt{T-t+1}} \int_0^1 \E \frac{(y(x_i)^2 + C_{Y,k} |y(x_i)|)\gamma(x_i)}{k(x_i,x_i)} \d\theta 
\\
&\quad+ \c{O}\del{\sqrt{h\log\frac{1}{h}}} + \c{O}(\omega_t(h))
\]
and the claim follows by taking $h\to 0$.
\end{proof}

We are now ready to prove the main regret bound.

\ThmTS*

\begin{proof}
We apply the developed theory: decomposing total regret into prior regret and excess regret, we can bound the prior regret using \Cref{lem:matern_sup}, which gives the first term in the bound.
We then bound the excess regret by the respective Bregman divergences, and bound those term-by-term using \Cref{thm:bregman_hessian_bound}.
This gives
\[
R(p,q) &\leq R(p, q^{(\gamma)}) + \E\sum_{t=1}^T \frac{\beta(\beta + C_{Y,k})}{2\sigma^2\sqrt{T-t+1}} \E\sup_{x\in X} \gamma(x)
\\
&\leq \sqrt{T}\del{1 + \frac{\beta(\beta + C_{Y,k})}{\sigma^2}} \E\sup_{x\in X} \gamma(x)
\]
using $\sum_{t=1}^T \frac{1}{\sqrt{T-t+1}} = \sum_{t=1}^T \frac{1}{\sqrt{t}} \leq 2\sqrt{T}$.
\end{proof}

This gives the general theory. 
We now study the behavior of $C_{Y,k}$ for the spaces of interest.

\section{Regret Bounds in General Spaces}
\label{apdx:rates}

We now apply \Cref{thm:ts} to a number of examples in various concrete settings.
The first of these is the classical finite setting, where we verify that we can recover the minimax rate.

\CorDiscreteRegret*

\begin{proof}
In this situation, $k(x,x') = 0$ for $x\neq x'$, so the key inequality reduces to 
\[
\sup_{\norm{y}_\infty\leq1} y(x) \leq C_{Y,k}
\]
which holds with $C_{Y,k} = 1$.
Applying \Cref{thm:ts} gives 
\[
R(p,q) \leq \del{\sigma + \frac{2}{\sigma}} \sqrt{2T\log N}
\]
and the claim follows by taking $\sigma = \sqrt{2}$.
\end{proof}

This should be compared with what would have happened for a constant-learning-rate-based FTPL algorithm, whose regularizer we denote by $\Gamma_{\eta,t}$.
There, repeating the same argument, we would have obtained $\Gamma^*_{\eta,t}(0) \leq \eta\sqrt{2\log N}$ for the first term in our bound, and $\frac{T}{\eta}\sqrt{2\log N}$ for the second term.
Balancing terms using a learning rate of $\sqrt{T}$, this would have given
\[ 
R(p,q) &\leq  \del{\eta + \frac{T}{\eta}} \sqrt{2 \log N} = 2\sqrt{2T\log N}
\]
which matches our rates, but improves the constant by a factor of $\sqrt{2}$.
Our analysis, while motivated by the Bayesian way of viewing the problem, as a byproduct also reproduces the bound obtained by \textcite{abernethy2016perturbation} for FTPL.

With this sanity-check completed, we proceed to study the continuous setting, where the Bayesian view of the problem tells us that we should choose $\gamma$ to be a strong equalizing adversary.
Motivated by a functional Central Limit Theorem, the natural choice where $Y$ is a bounded Lipschitz unit ball is the Matérn kernel with smoothness $1/2$.
To proceed, we first derive the expected supremum bounds needed for bounding prior regret. 
This follows by applying a standard chaining argument to bound the expected supremum in terms of the Dudley entropy integral, which we analyze as follows.

\begin{lemma}
\label{lem:matern_sup}
Let $X = [0,1]^d$, and let $\gamma\~[GP](0,k)$ where $k$ is Matérn-1/2 with variance $\sigma^2$ and length scale $\kappa$.
Then
\[
\E\sup_{x\in[0,1]^d}  \gamma(x) \leq 16 \sigma\sqrt{d\log\del{1+\frac{\sqrt{d}}{\kappa}}}
.
\]
Moreover, $\gamma$ admits a expected global modulus of continuity of $\psi(h) = \c{O}\del{\sqrt{h \log\del{\frac{1}{h}}}}$.
\end{lemma}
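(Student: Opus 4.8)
The plan is to bound $\E\sup_{x}\gamma(x)$ by Dudley's entropy integral \cite{talagrand2005generic}, after identifying the Gaussian process's canonical pseudometric and transferring covering numbers to it from the Euclidean cube. First I would compute the canonical pseudometric $d(x,x') = \sqrt{\E\del{\gamma(x)-\gamma(x')}^2} = \sigma\sqrt{2\del{1 - e^{-\abs{x-x'}/\kappa}}}$, and use $1-e^{-u}\le u$ to get $d(x,x')\le\sigma\sqrt{2\abs{x-x'}/\kappa}$; in particular the $d$-diameter of $[0,1]^d$ is at most $D := \sigma\sqrt{2\del{1-e^{-\sqrt d/\kappa}}}$. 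From this square-root comparison, a Euclidean ball of radius $\epsilon^2\kappa/(2\sigma^2)$ lies inside a $d$-ball of radius $\epsilon$, so covering $[0,1]^d$ by an axis-aligned grid gives $\log N(X,d,\epsilon) \le d\log\del{1 + \sqrt d\sigma^2/(\kappa\epsilon^2)}$; write $a := \sqrt d\sigma^2/\kappa$, so this is $d\log\del{1+a/\epsilon^2}$.

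The next ingredient is the elementary estimate $\int_0^\delta \sqrt{\log\del{1+a/\epsilon^2}}\,\d\epsilon \le \delta\del{\sqrt{\log\del{1+a/\delta^2}} + \sqrt{\pi/2}}$, which I would prove via the substitution $\epsilon = \delta e^{-v}$, the bound $\log\del{1 + (a/\delta^2)e^{2v}} \le 2v + \log\del{1+a/\delta^2}$, subadditivity of $\sqrt{\cdot}$, and $\int_0^\infty\sqrt{2v}\,e^{-v}\,\d v = \sqrt{\pi/2}$. Applying Dudley's inequality with an explicit constant and $\delta = D$, the supremum bound is reduced to controlling $a/D^2 = \frac{\sqrt d/\kappa}{2(1-e^{-\sqrt d/\kappa})}$, which is at most $\frac12\del{1 + \sqrt d/\kappa}$ by the inequality $(1+u)e^{-u}\le1$, together with the prefactor $D$. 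Here one must distinguish the regime $\sqrt d/\kappa\le1$ --- where $D \le \sigma\sqrt{2\sqrt d/\kappa}$ and $\log\del{1+\sqrt d/\kappa}\ge\frac12\sqrt d/\kappa$ let one absorb $D$ into $\sigma\sqrt{\log\del{1+\sqrt d/\kappa}}$ --- from $\sqrt d/\kappa>1$, where simply $D\le\sqrt2\sigma$ and $\log\del{1+\sqrt d/\kappa}\ge\log2$. Collecting constants across the two cases yields $\E\sup_{x}\gamma(x)\le16\sigma\sqrt{d\log\del{1+\sqrt d/\kappa}}$.

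For the modulus of continuity, the observation is that $\abs{x-x'}\le h$ forces $d(x,x')\le\delta_h := \sigma\sqrt{2h/\kappa}$, so the local (modulus-of-continuity) form of the chaining bound gives $\E\sup_{\abs{x-x'}\le h}\abs{\gamma(x)-\gamma(x')} \le C\int_0^{\delta_h}\sqrt{\log N(X,d,\epsilon)}\,\d\epsilon$; reusing the integral estimate above, now with $a/\delta_h^2 = \sqrt d/(2h)$, produces a bound of order $\sqrt d\,\sigma\sqrt{h/\kappa}\,\del{\sqrt{\log\del{1+\sqrt d/(2h)}} + \sqrt{\pi/2}} = \c{O}\del{\sqrt{h\log(1/h)}}$ as $h\to0$, as claimed. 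I expect the main obstacle to be not conceptual but a matter of care in the regime of large length scale $\kappa$: one must integrate only up to the true $d$-diameter $D$ rather than a crude constant, and then squeeze the two-regime analysis of $a/D^2$ and of $D$ so that the final bound takes the clean form $\sigma\sqrt{d\log(1+\sqrt d/\kappa)}$ with the stated constant; everything else is a routine chaining computation together with a citation of the precise local-chaining inequality with tracked constants.
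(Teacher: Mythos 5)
Your proposal is correct and follows essentially the same route as the paper: Dudley's entropy integral for the canonical pseudometric $d(x,x')=\sigma\sqrt{2(1-e^{-|x-x'|/\kappa})}$, a Euclidean grid cover transferred to $d$-balls via $1-e^{-u}\le u$, and a second chaining argument over the strip $\{|x-x'|\le h\}$ for the modulus of continuity. The only substantive differences are bookkeeping: you close the entropy integral with an elementary substitution-plus-subadditivity estimate and a two-regime analysis of the diameter, whereas the paper evaluates the integral exactly via the error function and verifies its constant numerically (and obtains the modulus through the symmetrization $\E\sup|\Delta|\le 2\,\E\sup\Delta+\inf\sqrt{\Var(\Delta)}$ rather than a local chaining inequality), so with your accounting the specific prefactor $16$ depends on which explicit Dudley constant you invoke, though the rate is identical.
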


\begin{proof}
Note first that the result for general $\sigma>0$ follows immediately from the case $\sigma=1$, so assume this without loss of generality.
By the Dudley entropy integral bound \cite[Theorem 10.1]{lifshits12}, we have 
\[
\E \sup_{x\in[0,1]} \gamma(x) \leq 4\sqrt{2} \int_0^{1/2} \sqrt{\log C^{(d_\kappa)}_\eps([0,1]^d)} \d\eps
\]
where $C^{(d_\kappa)}_\eps([0,1]^d)$ is the minimum number of closed balls of radius $\eps$ needed to cover $[0,1]^d$ with respect to the \emph{kernel distance} on $[0,1]^d$, defined as
\[
d_\kappa(x,x')^2 = \E(\gamma(x) - \gamma(x'))^2 = 2 - 2k(x,x')
\]
where the $\kappa$-subscript emphasizes dependence on length scale.
We bound $C^{(d_\kappa)}_\eps([0,1]^d)$ from above by exhibiting a cover: choose one consisting of $n_\eps$ points in each dimension, for a total of $N_\eps = n_\eps^d$ points, placed at 
\[
\cbr{\frac{2\ell+1}{2n_\eps} :  \ell\in [n_\eps]}^d
\]
where dependence of the cover size on $\eps$ is to be determined next.
These points form a cover of radius $\frac{\sqrt{d}}{2n_\eps}$ in the Euclidean norm, meaning that we can choose $n_\eps$ to ensure $C_\eps^{(\norm{\.})}([0,1]^d) \leq \ceil{\frac{\sqrt{d}}{2\eps}}^d \leq \del{\frac{\sqrt{d}}{\eps}}^d$.
Next, note that the kernel distance is monotone in the Euclidean norm.
For two points $x_1,x_2$ with $\norm{x_1-x_2} = \delta_\eps$, we have
\[
k(x_1,x_2) &= \exp\del{-\frac{\delta_\eps}{\kappa}}
&
d_{k_\kappa}(x_1,x_2) &= \sqrt{2 - 2 \exp\del{-\frac{\delta_\eps}{\kappa}}}
\]
To get $d_{k_\kappa}(x_1,x_2) \leq \eps$, we must have $2 - 2 \exp\del{-\frac{\delta_\eps}{\kappa}} \leq \eps^2$, and solving for $\delta_\eps$ gives $\delta_\eps \leq -\kappa \log\del{1 - \frac{\eps^2}{2}}$. 
Let $\delta_\eps = \min\del{\sqrt{d},\kappa\log\del{1 - \frac{\eps^2}{2}}}$.
Then we have 
\[
C^{(d_\kappa)}_\eps([0,1]^d) \leq C^{(\norm{\.})}_{\delta_\eps}([0,1]^d) \leq \del{\frac{\sqrt{d}}{\delta_\eps}}^d &= \max\del{1,\frac{d^{d/2}}{\kappa^d\del{-\log\del{1 - \frac{\eps^2}{2}}}^d}} 
\\
&\leq \max\del{1,\frac{2^d d^{d/2}}{\kappa^d \eps^{2d}}}
\]
where the latter inequality follows from $-\log(1-x) \geq x$, which follows by convexity.
We use this to bound the Dudley entropy integral.
Let $\psi = \frac{\sqrt{d}}{\kappa}$ and $m=\min\del{\frac{1}{2}, \sqrt{2\psi }}$.
Write
\[
\int_0^{m} \sqrt{\log\del{\frac{2\psi}{\eps^2}}} \d \eps &= \sbr{\eps \sqrt{d\log\del{\frac{2\psi}{\eps^2}}} - \sqrt{d\pi\psi} \cdot \f{erf} \del{\sqrt{\frac{1}{2}\log\del{\frac{2\psi}{\eps^2}}}}}_0^m 
\\
&= 
\begin{cases}
\frac{1}{2}\sqrt{d \log(8\psi)} + \sqrt{d \pi \psi }\del{1-\f{erf}\del{\sqrt{\frac{1}{2} \log\del{ 8 \psi}}}} & : \psi\geq \frac{1}{8}
\\
\sqrt{d \pi \psi}\del{1-\f{erf}(0)}& : \psi<\frac{1}{8}
\end{cases} 
\\
& = 
\begin{cases}
\frac{1}{2} \sqrt{ d \log( 8 \psi)} + \sqrt{d \pi \psi}\del{1-\f{erf}\del{\sqrt{\frac{1}{2} \log\del{8\psi}}}} & : \psi \geq \frac{1}{8}
\\
\sqrt{d\pi \psi}& : \psi<\frac{1}{8}.
\end{cases}           
\]
where $\f{erf}$ is the error function. 
Using $1-\f{erf}(x)\leq \min\del{1, \frac{\exp(-x^2)}{x\sqrt{\pi}}}$, we get 
\[
\int_0^{m} \sqrt{\log\del{\frac{2\psi}{\eps^2}}} \d \eps \leq 
\begin{cases}
\frac{1}{2}\sqrt{d\log(8\psi)} + \min\del{\sqrt{d\pi\psi},\frac{\sqrt{d}}{2\sqrt{\log(8\psi)}}}& : \psi\geq \frac{1}{8} \\
\sqrt{d\pi\psi}& : \psi< \frac{1}{8}
\end{cases}        
\]
For $C$ large enough, it therefore follows that 
\[
\int_0^{m} \sqrt{\log C^{(d_\kappa)}_\eps([0,1]^d)} \d \eps &\leq  C\sqrt{\log(1+\psi)}
\]
where one can verify numerically that $C=2.7$ is sufficient.
Substituting this above, we get
\[
\E \sup_{x\in[0,1]} \gamma(x) \leq 16 \sqrt{d\log\del{1+\frac{\sqrt{d}}{\kappa}}}
\]
which gives the first claim.

We now examine the expected global modulus of continuity.
For this, define the mean-zero Gaussian process $\Delta(x,x') = \gamma(x) - \gamma(x')$ on the set $\cbr{(x,x')\in[0,1]^d  : \norm{x-x'}\leq h}$ of nearby pairs of points.
We have
\[
\E \sup_{\substack{x,x'\in[0,1]^d\\\norm{x-x'}\leq h}} |\gamma(x) - \gamma(x')| &= \E \sup_{\substack{x,x'\in[0,1]^d\\\norm{x-x'}\leq h}} |\Delta(x,x')| 
\\
&\leq 2\E \sup_{\substack{x,x'\in[0,1]^d\\\norm{x-x'}\leq h}} \Delta(x,x') + \inf_{\substack{x,x'\in[0,1]^d\\\norm{x-x'}\leq h}} \sqrt{\Var(\Delta(x,x'))}
\]
where the inequality is by \textcite[Proposition 10.2]{lifshits12}.
Since we have $\Delta(x,x) = 0$ deterministically by definition, it follows that the infimum over the process' variance is zero, so it suffices to bound first term.
From here, the argument is essentially a re-run of the preceding expected supremum bound, with a few key modifications worth noting:
\1 The space over which the supremum is taken is the rectangular strip defined by $\norm{x - x'} \leq h$, therefore the upper limit of the Dudley entropy integral is given by half the maximal standard deviation, which, for $\Delta$, is $\sqrt{1-\exp\del{-\frac{h}{\kappa}}} \leq \sqrt{\frac{h}{2\kappa}}$.
\2 The covariance satisfies $\Cov(\Delta(x_1,x'_1),\Delta(x_2,x'_2)) \leq 3 k(x_1,x_2) + k(x'_1,x'_2)$, allowing one to relate kernel distance covers with respect to $\Delta$ to those with respect to $\gamma$, which have been computed already in the expected supremum derivation. 
\0 
Omitting the calculation to reduce repetitiveness, we obtain a bound of
\[
\E \sup_{\substack{x,x'\in[0,1]^d\\\norm{x-x'}\leq h}} |\gamma(x) - \gamma(x')| \leq 32 \sqrt{\frac{dh}{2\kappa} \log \frac{20\sqrt{d}}{h}}
\]
which gives our desired global modulus of continuity.
\end{proof}

Applying the developed theory, we obtain the following.

\CorBLRegret*

\begin{proof}
We first note that Matérn kernels satisfy the assumptions needed, where the expected global modulus of continuity given above to be $\c{O}\del{\sqrt{h\log\del{\frac{1}{h}}}}$.
The key inequality is
\[
\sup_{y\in Y} y(x) - y(x') \frac{k(x,x')}{k(x',x')} \overset{\t{(i)}}&\leq \sup\cbr{u(x) : u(x') = 0, \norm{u}_\infty \leq \beta, \abs{u}_{\f{Lip}} \leq \lambda + \frac{\beta}{\kappa}}
\\
\overset{\t{(ii)}}&= \min\del{2\beta, \del{\lambda + \frac{\beta}{\kappa}} |x - x'| } 
\\
\overset{\t{(iii)}}&\leq \ubr{\frac{2\beta}{\del{1 - e^{-\frac{2\beta}{\lambda\kappa + \beta}}}}}{C_{Y,k}} \del{1 - \frac{k(x,x')}{k(x',x')}}
\]
where (i) follows by noting that the image of $Y$ under the map $y \mapsto y - y(x')\frac{k(x',\.)}{k(x',x')}$ is contained in the set of functions which are bounded by $2\beta$, have Lipschitz constant $\lambda + \frac{\beta}{\kappa}$, and go through zero at $x'$, (ii) follows because all functions in this class are upper-bounded by their Lipschitz envelope, which is given by the formula, and (iii) follows by explicit calculation from the functional form of the Matérn-1/2 kernel, where equality occurs when $|x'-x| = \frac{2\beta}{\lambda\kappa+\beta}$.
Taking $\kappa = \frac{\beta}{\lambda}$, and then $\sigma = \beta$, we obtain 
\[
R(p,q) &\leq \del{\sigma + \frac{\beta^2\del{1 + \frac{2}{1 - \frac{1}{e}}}}{\sigma}} \E_{x\in[0,1]^d} \sup_{x\in X} \gamma(x)
\\
&= \beta \del{32 + \frac{32}{1 - \frac{1}{e}}}\sqrt{T d\log\del{1 + \sqrt{d}\frac{\lambda}{\beta}}}
.
\]
\end{proof}

\end{document}